\documentclass{article} 
\usepackage{iclr2027_conference,times}

\usepackage{amsmath,amsfonts,bm}

\def\eqref#1{equation~\ref{#1}}

\def\1{\bm{1}}

\DeclareMathAlphabet{\mathsfit}{\encodingdefault}{\sfdefault}{m}{sl}
\SetMathAlphabet{\mathsfit}{bold}{\encodingdefault}{\sfdefault}{bx}{n}

\usepackage{hyperref}
\usepackage{url}

\usepackage{booktabs}
\usepackage{amssymb}
\usepackage{amsmath, amsthm}
\newtheorem{proposition}{Proposition}

\usepackage{multirow, array}
\usepackage{graphicx}
\usepackage{wrapfig}
\usepackage{subfigure}
\usepackage{algorithm}
\usepackage{listings}
\usepackage{adjustbox}
\usepackage{blindtext}
\usepackage[most]{tcolorbox}
\usepackage[table]{xcolor}
\usepackage{fontawesome5}

\newtcolorbox{finding}[1]{colback=white, title=#1}
\definecolor{preferred}{RGB}{232,242,252}

\hypersetup{
hidelinks,
colorlinks=true,
linkcolor=black,
citecolor=blue
}

\title{Which Influence Are We Estimating? The Role of Counterfactual Specifications in Data Attribution}

\iclrfinalcopy

\author{Zhe Li, Wei Zhao, Peixin Zhang, Jun Sun \\
Singapore Management University\\
\texttt{\{zheli,wzhao,pxzhang,junsun\}@smu.edu.sg} \\
}

\begin{document}

\maketitle

\begin{abstract}
  Estimating the influence of training examples on model behavior is essential for data debugging, valuation, and attribution. Existing influence estimators often produce incompatible rankings, which are commonly ascribed to approximation error. We argue that a more fundamental source of disagreement is specification mismatch: influence depends on the behavior being attributed, the intervention applied to each training example, and the counterfactual training process that maps the intervention to a model response. These choices are especially important when the target behavior requires a tractable surrogate, such as query loss, a logit, or a margin. We formalize influence as a counterfactual estimand, distinguish specification mismatch across estimands from approximation error in estimating a fixed estimand, and organize representative estimators by their implied specifications. We further derive a local decomposition that exposes how behavior signals, training signals, and counterfactual parameter responses interact. Controlled experiments show that exact estimands under different specifications can induce different rankings, whereas approximation error grows as perturbations move farther from their linearization points. Experiments on noisy label detection and LLM attribution show that specification choices significantly affect attribution quality, especially for the choice of behavior surrogate. Behavior-aligned specifications can identify target-specific training examples obscured by default loss-based or similarity-based specifications. These results establish specification analysis as a necessary first step for interpreting and comparing data influence estimators.
\end{abstract}

\section{Introduction}
Understanding how training examples affect model behavior is a fundamental problem in modern machine learning, with applications in model debugging~\citep{IF2017Koh,yeh2018representer,pruthi2020estimating}, data selection~\citep{xia2024less,zhao2025beyond,dai2025improving}, and safety analysis~\citep{RapidIn,coalson2025if,li2026where}. A wide range of approaches for estimating the influence of training data (i.e., influence estimators) has been proposed, from retraining-based methods such as leave-one-out evaluation~\citep{molinaro2005prediction} and data Shapley values~\citep{ghorbani2019data,kwon2021beta} to more scalable methods based on feature similarity~\citep{yeh2018representer,hanawa2021evaluation,pezeshkpour2021empirical,wang2024empirical,yang2025gmvaluator,li2026where} or gradient information~\citep{hampel1974influence,ling1984residuals,IF2017Koh,kwon2024datainf,park2023trak,RapidIn}. Despite their widespread use, these methods often produce incompatible influence rankings on the same model and dataset~\citep{bae2022if,basu2021influence,li2025influence,aberger2025limitations,li-etal-2026-instruction-reasoning}, as exemplified in Figure~\ref{fig:1-1}. This disagreement is particularly difficult to interpret for large language models (LLMs), where retraining-based references are often prohibitively expensive.

These discrepancies are commonly attributed to approximation error since many practical methods replace retraining with local linearizations or cheaper similarity-based scores. However, influence estimators may in fact target different specifications of influence, including the behavior being attributed, the intervention applied, and the counterfactual training process. Changing any of these choices changes the counterfactual question and can alter the influence ranking even when the corresponding quantity is computed exactly. Among these choices, behavior is especially important when the target behavior is non-differentiable. Practical methods therefore rely on tractable but potentially problematic surrogates, such as query loss or a logit. Query loss remains a common default, but whether it captures the target behavior is often left unexamined. A mismatched surrogate can lead to poor attribution even when the estimator accurately estimates its specified quantity.

\begin{figure}[t]
\centering
\includegraphics[width=\columnwidth]{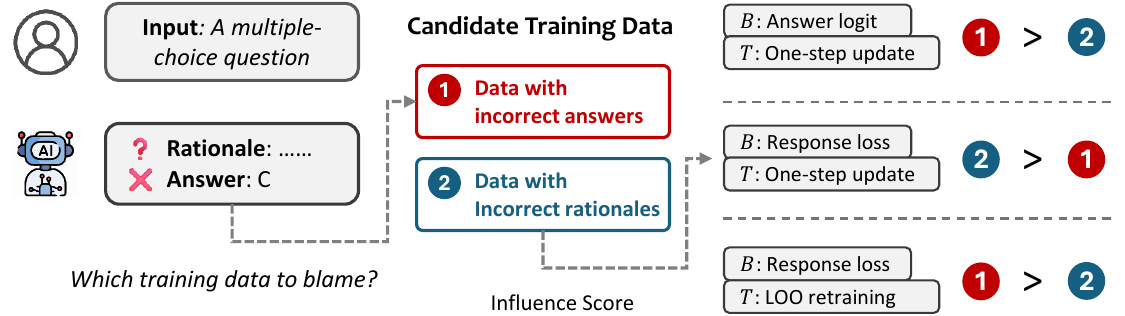}
\vskip -0.01in
\caption{Influence rankings depend on the counterfactual specification. Different behavior surrogates and counterfactual training processes can induce different rankings for the same query.}
\label{fig:1-1}
\end{figure}

To clarify these ambiguities, we introduce an influence specification framework that disentangles specification mismatch from approximation error. \emph{Specification mismatch} occurs when estimators target different counterfactual quantities, while \emph{approximation error} occurs when estimators target the same specified quantity but approximate it with different levels of accuracy. Within our framework, the intervention defines what is changed in the training data or objective, whereas the counterfactual training process maps this change to a counterfactual model and determines the resulting parameter response through which the behavior change is measured. We then organize representative influence estimators according to their specifications, and extend a local decomposition of learning dynamics~\citep{ren2025learning} to counterfactual training processes and expose how behavior signals, training signals, and counterfactual parameter responses interact.

We evaluate these distinctions in controlled experiments and practical attribution tasks. Controlled experiments show that methods with different specifications can induce different influence rankings, while approximation error under a fixed specification grows as local approximations move farther from their reference points. Experiments on noisy label detection and LLM attribution further show that behavior specification affects attribution quality and downstream data decisions. In particular, behavior-aligned surrogates can reveal target-specific training examples obscured by default loss-based or similarity-based specifications. These results establish specification analysis as a necessary first step for interpreting and comparing data influence estimators. 

Our main contributions are as follows:
\begin{itemize}
    \item We formalize data influence as a counterfactual estimand and distinguish specification mismatch from approximation error as different sources of ranking disagreement.
    \item We organize representative influence estimators according to their implied specifications and extend a local decomposition of learning dynamics to counterfactual training processes.
    \item We show through controlled experiments that changing the specification can alter exact influence rankings, whereas approximation error under a fixed specification distorts rankings as local approximations move away from their reference points.
    \item We demonstrate on noisy label detection and LLM attribution that behavior-aligned surrogates can improve attribution quality and downstream data decisions, while default loss-based or similarity-based specifications can obscure target-specific training examples.
\end{itemize}
\section{Influence as a Counterfactual Estimand}
In this section, we formalize data influence as a counterfactual estimand, which measures how a specified model behavior would change under an intervention associated with a training example. Let $\mathcal D=\{z_i\}_{i=1}^n$ denote the training set, $q$ a query of interest, and $\theta_{\mathcal D}$ the factual model obtained by training on $\mathcal D$. We use $B(q;\theta)\in\mathbb R$ to denote the operationalized behavior whose change we seek to attribute. This behavior may be the target behavior itself or a tractable surrogate such as a loss.

\textbf{Influence Specification}. Let $\mathcal P_k$ denote an intervention on $z_k$, such as upweighting $z_k$ or modifying its training objective. Let $\mathcal T$ denote a specified counterfactual training process, such as retraining or a local update. Under $\mathcal P_k$, the process $\mathcal T$ produces a counterfactual model $\theta_k^{\mathrm{cf}}\sim\mathcal T(\mathcal D,\theta_{\mathcal D},\mathcal P_k)$. To characterize existing approaches on data influence estimation, we define an influence specification as $\mathcal S=(B,\mathcal P,\mathcal T)$ where $\mathcal P=\{\mathcal P_k\}_{k=1}^n$ denotes the family of example-specific interventions. The influence of $z_k$ on the behavior measured at $q$ is the counterfactual estimand:
\begin{equation}\label{eq:1}
    I_{\mathcal S}(q,k)=
    \mathbb E
    \left[
        B(q;\theta_k^{\mathrm{cf}})-B(q;\theta_{\mathcal D})
    \right].
\end{equation}
The expectation accounts for randomness in training or evaluation when present. Equation~\ref{eq:1} makes explicit that influence is defined relative to $\mathcal S$. Each choice of $\mathcal S$ determines the answer of a counterfactual question. Changing $\mathcal S$ can alter the exact influence ranking even when the corresponding estimand is computed exactly. We adopt the convention that positive influence means that the specified intervention increases $B$. For existing methods, we may flip the sign to keep the definition of influence consistent. For example, under data removal, a negative influence means that removing $z_k$ decreases $B$, indicating that $z_k$ supports the measured behavior in the factual model.

\textbf{Estimands and Estimators}. Given an estimand $I_{\mathcal S}(q,k)$ defined above, an influence estimator $\widehat I_{\mathcal S}(q,k)$ is a method for approximating it (since computing the estimand exactly is often computationally infeasible). Under a fixed specification, any discrepancy between the estimator and the estimand constitutes approximation error. For a candidate index set $\mathcal K\subseteq\{1,\ldots,n\}$, let
\begin{equation}\label{eq:2}
    \pi_{\mathcal S}(q)
    =
    \operatorname{argsort}_{k\in\mathcal K} I_{\mathcal S}(q,k),
    \qquad
    \widehat{\pi}_{\mathcal S}(q)
    =
    \operatorname{argsort}_{k\in\mathcal K} \widehat I_{\mathcal S}(q,k)
\end{equation}
denote the exact and estimated influence rankings in descending order. When $\mathcal S$ is fixed, disagreement between $\pi_{\mathcal S}(q)$ and $\widehat{\pi}_{\mathcal S}(q)$ measures how approximation error affects the ranking. In contrast, exact rankings under different $\mathcal S$ may disagree as they answer different counterfactual questions.

When two methods share the same specification, disagreement between their estimated rankings implies approximation error in at least one method, but does not identify the more accurate estimator. Influence estimators should therefore be compared against references defined by the same specification before ranking disagreement is interpreted as estimation failure.
\section{Counterfactual Specifications of Influence Estimators}
In this section, we characterize existing popular influence estimators through the specification $\mathcal S=(B,\mathcal P,\mathcal T)$. We first show that similarity-based estimators admit an exact counterfactual interpretation under a standardized local update. We then derive a local decomposition of learning-based estimators, in which behavior and training signals interact through the parameter response induced by the counterfactual process. Table~\ref{tab:1} summarizes the specifications of representative estimators.

\textbf{Similarity-Based Estimators}. Similarity-based estimators measure the affinity between a query and a training example in a chosen feature space. Given a feature map $\phi$, a common score is $\langle\phi(q),\phi(x_k)\rangle$, as used in representation similarity methods~\citep{hanawa2021evaluation,pezeshkpour2021empirical,yang2025gmvaluator}. For a scalar model output, choosing $\phi(x)=\nabla_\theta f_\theta(x)|_{\theta=\theta_{\mathcal D}}$ yields the scalar neural tangent kernel~\citep{jacot2018neural,chen2021neural,nguyen2021dataset}.

Although similarity scores do not explicitly simulate a training process, they admit a counterfactual interpretation under a constructed local specification. Consider a frozen-feature linear readout $f_w(x)=w^\top\phi(x)$ with query behavior $B(q;w)=f_w(q)$. Define $\mathcal P_k^{\mathrm{sim}}$ through the label-agnostic auxiliary loss $\ell_k^{\mathrm{sim}}(w)=-f_w(x_k)$, and let $\mathcal T_{\mathrm{sim}}$ apply one gradient step from the factual readout $w_{\mathcal D}$, yielding an update in the direction $\phi(x_k)$ as
\begin{equation}
    w_k^{\mathrm{cf}}=w_{\mathcal D}-\eta\cdot\nabla_w\ell_k^{\mathrm{sim}}(w_{\mathcal D})=w_{\mathcal D}+\eta\cdot\phi(x_k),
\end{equation}
where $\eta>0$ is shared across training examples. The resulting change in query behavior is
\begin{equation}\label{eq:4}
    I_{\mathcal S_{\mathrm{sim}}}(q,k)=
    B(q;w_k^{\mathrm{cf}})-B(q;w_{\mathcal D})=
    \eta\cdot\left\langle\phi(q),\phi(x_k)\right\rangle.
\end{equation}
Thus, up to the common positive scale $\eta$, feature similarity is the exact influence under the constructed specification $\mathcal S_{\mathrm{sim}}=(B,\mathcal P^{\mathrm{sim}},\mathcal T_{\mathrm{sim}})$. This construction isolates feature geometry through a readout update determined solely by $\phi(x_k)$, making explicit the behavior and intervention associated with the similarity score. A positive inner product increases the query readout under this standardized update, whereas a negative inner product decreases it.

\begin{table}[t]
\centering
\small
\setlength{\tabcolsep}{14pt}
\caption{Representative influence estimators and their implied counterfactual specifications.}
\vskip 0.1in
\label{tab:1}
\begin{tabular}{@{}llll@{}}
\toprule
\textbf{Estimator} & \textbf{Behavior $B$} & \textbf{Intervention $\mathcal P$} & \textbf{Training process $\mathcal T$} \\
\midrule
Feature similarity & Query readout & Add auxiliary loss & One-step update \\
Gradient similarity & Query loss & Upweight $z_k$ & One-step update \\
Trajectory-based & Query loss & Upweight $z_k$ & Trajectory aggregation \\
Influence functions & Query loss & Infinitesimally upweight $z_k$ & Local re-optimization \\
Leave-one-out & Query behavior & Remove $z_k$ & Retraining \\
\bottomrule
\end{tabular}
\end{table}

\textbf{Learning-Based Estimators}. Unlike similarity-based estimators, learning-based estimators use the training objective to determine each example's update direction and combine it with a query behavior signal~\citep{pruthi2020estimating,park2023trak,RapidIn,xia2024less,wang2026better}.

We first consider a one-step gradient counterfactual. Let $\ell(z_k;\theta)$ be the per-example training loss. Let $\mathcal P_k^+$ denote an upweighting intervention that adds one occurrence of $z_k$ to an auxiliary update objective. Let $\mathcal T_{\mathrm{grad}}$ apply one-step gradient descent $\theta_{k,\mathrm{grad}}^{\mathrm{cf}}=\theta_{\mathcal D}-\eta\cdot\nabla_{\theta}\ell(z_k;\theta_{\mathcal D})$, where any objective normalization is absorbed into $\eta>0$. For a differentiable query behavior $B(q;\theta)$, a first-order Taylor expansion around $\theta_{\mathcal D}$ gives
\begin{equation}\label{eq:5}
    I_{\mathcal S_{\mathrm{grad}}}(q,k)=
    B\!\left(q;\theta_{k,\mathrm{grad}}^{\mathrm{cf}}\right)-B\!\left(q;\theta_{\mathcal D}\right)=-\eta\cdot\left\langle\nabla_{\theta}B(q;\theta_{\mathcal D}),\nabla_{\theta}\ell(z_k;\theta_{\mathcal D})\right\rangle+R_k,
\end{equation}
where $R_k$ is the Taylor remainder. Dropping $R_k$ yields the corresponding first-order estimator. Compared with Equation~\ref{eq:4}, the update direction here is determined by the training objective and therefore depends on the training target and the current model prediction.

Following the local decomposition of learning dynamics~\citep{ren2025learning}, we factor parameter gradients through the model outputs. Let $f_\theta(x)\in\mathbb R^m$ denote the model output and $J_x=\nabla_\theta f_\theta(x)\in\mathbb R^{m\times d}$ its parameter Jacobian. By the chain rule, it gives $\nabla_{\theta}\ell(z_k;\theta_{\mathcal D})=J_{x_k}^{\top}s_{\ell}(z_k)$ and $\nabla_{\theta}B(q;\theta_{\mathcal D})=J_q^{\top}s_B(q)$ where $s_{\ell}(z_k)=\nabla_{f_\theta(x_k)}\ell(z_k;\theta_{\mathcal D})$ and $s_B(q)=\nabla_{f_\theta(q)}B(q;\theta_{\mathcal D})$ denote the example-specific learning and behavior signal. Substituting these expressions into Equation~\ref{eq:5} gives the first-order estimator
\begin{equation}\label{eq:6}
    \widehat I_{\mathcal S_{\mathrm{grad}}}(q,k)=-\eta\cdot s_B(q)^{\top}K_{\theta_{\mathcal D}}(q,x_k)\ s_{\ell}(z_k),
    \qquad
    K_{\theta_{\mathcal D}}(q,x_k)=J_qJ_{x_k}^{\top}.
\end{equation}
Equation~\ref{eq:6} separates the behavior signal, Jacobian interaction, and training signal. For scalar outputs (or can be reduced to a scalar), $K_{\theta_{\mathcal D}}(q,x_k)$ is the inner product of the corresponding Jacobian features, linking this expression to Equation~\ref{eq:4}. Thus, learning-based scores can be viewed as Jacobian similarities conditioned on the query behavior and training objective.

The same perspective extends to general counterfactual processes through their induced parameter paths. Let $\mathcal P_k(\alpha)$ interpolate between the factual setting and the full intervention, and let $\theta_k(\alpha)$ denote the corresponding model path, with $\theta_k(0)=\theta_{\mathcal D}$ and $\theta_k(1)=\theta_k^{\mathrm{cf}}$. For a differentiable path and behavior, the endpoint difference in Equation~\ref{eq:1} can be written as
\begin{equation}\label{eq:7}
    I_{\mathcal S}(q,k)=
    \mathbb E\left[
    \int_0^1
    \nabla_\theta B(q;\theta_k(\alpha))^\top
    \frac{d\theta_k(\alpha)}{d\alpha}
    \,d\alpha
    \right].
\end{equation}
Equation~\ref{eq:7} accumulates local interactions between the behavior gradient and the parameter response along the counterfactual path. For example, consider an upweighting path $L_{\mathcal P_k(\alpha)}(\theta)=L_{\mathcal D}(\theta)+\alpha\cdot\ell(z_k;\theta)$, the derivative of the training gradient with respect to $\alpha$ at $\alpha=0$ is $\nabla_\theta\ell(z_k;\theta_{\mathcal D})$. If the local parameter response is linear in this perturbation, we have
\begin{equation}\label{eq:8}
    \left.
    \frac{d\theta_k(\alpha)}{d\alpha}
    \right|_{\alpha=0}
    =
    -\mathcal R_{\mathcal T}\cdot
    \nabla_\theta\ell(z_k;\theta_{\mathcal D}).
\end{equation}
Here $\mathcal R_{\mathcal T}$ maps the training gradient perturbation to the negative first-order parameter response under $\mathcal T$. A one-step update has $\mathcal R_{\mathcal T}=\eta\cdot I$ where $I$ is the identity operator. Local re-optimization along a differentiable stationary point branch with invertible Hessian $H_{\mathcal D}$ has $\mathcal R_{\mathcal T}=H_{\mathcal D}^{-1}$, recovering the classical influence function response~\citep{hampel1974influence,ling1984residuals,IF2017Koh}. Trajectory-based methods extend this structure across checkpoints by accumulating gradient interactions or propagating perturbations through unrolled update Jacobians~\citep{pruthi2020estimating,bae2024training}. Substituting Equation~\ref{eq:8} into the integrand of Equation~\ref{eq:7} at the factual model gives
\begin{equation}\label{eq:9}
    \widehat I_{\mathcal S}(q,k)=
    -s_B(q)^\top K_{\mathcal T}(q,x_k)\ s_\ell(z_k),
    \qquad
    K_{\mathcal T}(q,x_k)=J_q\mathcal R_{\mathcal T}J_{x_k}^{\top},
\end{equation}
where $K_{\mathcal T}(q,x_k)$ captures the transition-conditioned interaction between the query and training example. Equation~\ref{eq:9} generalizes the one-step decomposition in Equation~\ref{eq:6} to processes admitting the local response in Equation~\ref{eq:8}, showing that the training process determines how behavior and training signals interact. Equation~\ref{eq:7} is the exact path-level representation, whereas Equation~\ref{eq:9} is its first-order approximation obtained by linearizing at the factual model. Taken together, influence estimators share a common local structure. Representation similarity drops both signals, and restoring them recovers much of the gap to learning-based estimators as shown in Appendix~\ref{app:B-2}.
\section{Controlled Experiments on Influence Specification}
In this section, we study how influence specifications affect influence rankings. We first compare exact counterfactual quantities under different specifications, then fix $\mathcal S$ to measure approximation error relative to specification-matched references.

\textbf{Controlled Setting}. We use FashionMNIST~\citep{xiao2017fashion} with an $\ell_2$-regularized multinomial logistic regression model, whose smooth strongly convex objective has a unique global optimum, enabling accurate computation of Hessian and finite-perturbation references. Starting from this optimum, we vary one component of $\mathcal S=(B,\mathcal P,\mathcal T)$ at a time. We consider four behavior measures (query loss, logit, soft margin, and hard margin), finite upweighting interventions $\mathcal P_k(\alpha)$, and counterfactual processes including one-step updates, unrolled trajectories, inverse-Hessian responses, and Newton re-optimization. We measure ranking agreement using Kendall's $\tau$~\citep{kendall1938new}. We additionally perform a non-convex check on CIFAR-10~\citep{krizhevsky2009learning} with a simple CNN. Detailed configurations are provided in Appendix~\ref{app:A-1}.

\begin{finding}{Finding 1: Changing the specification changes the question.}
    Different choices of $B$, $\mathcal P$, or $\mathcal T$ can produce different influence rankings since they answer different counterfactual questions.
\end{finding}

Figure~\ref{fig:4-1}(a) isolates specification mismatch by varying one component of $\mathcal S$ at a time. Along the behavior axis, we fix the intervention and one-step update, and compare exact behavior changes under soft margin, hard margin, and query logit with those under query loss. Soft margin produces the same ranking as query loss\footnote{The soft margin and query loss agree exactly because they are related by a strictly monotonic transformation. See Proposition~\ref{prop:monotone} for details.}, whereas hard margin and query logit produce significantly different rankings. Along the intervention axis, we fix the query loss and counterfactual process, and compare exact leave-one-out removal with finite upweighting at different perturbation scales. The rankings diverge as the intervention changes, showing that removing and reweighting a training example generally define different influence quantities, even when both counterfactual models are computed exactly. Along the training process axis, five-step unrolled responses remain close to one-step responses, whereas the infinitesimal re-optimization response represented by the inverse Hessian is nearly uncorrelated with the one-step estimand. Thus, changing any component of the specification can significantly alter the exact influence ranking. Influence methods should be compared only after their intended reference specifications are aligned.

\begin{figure}[t]
\centering
\includegraphics[width=\columnwidth]{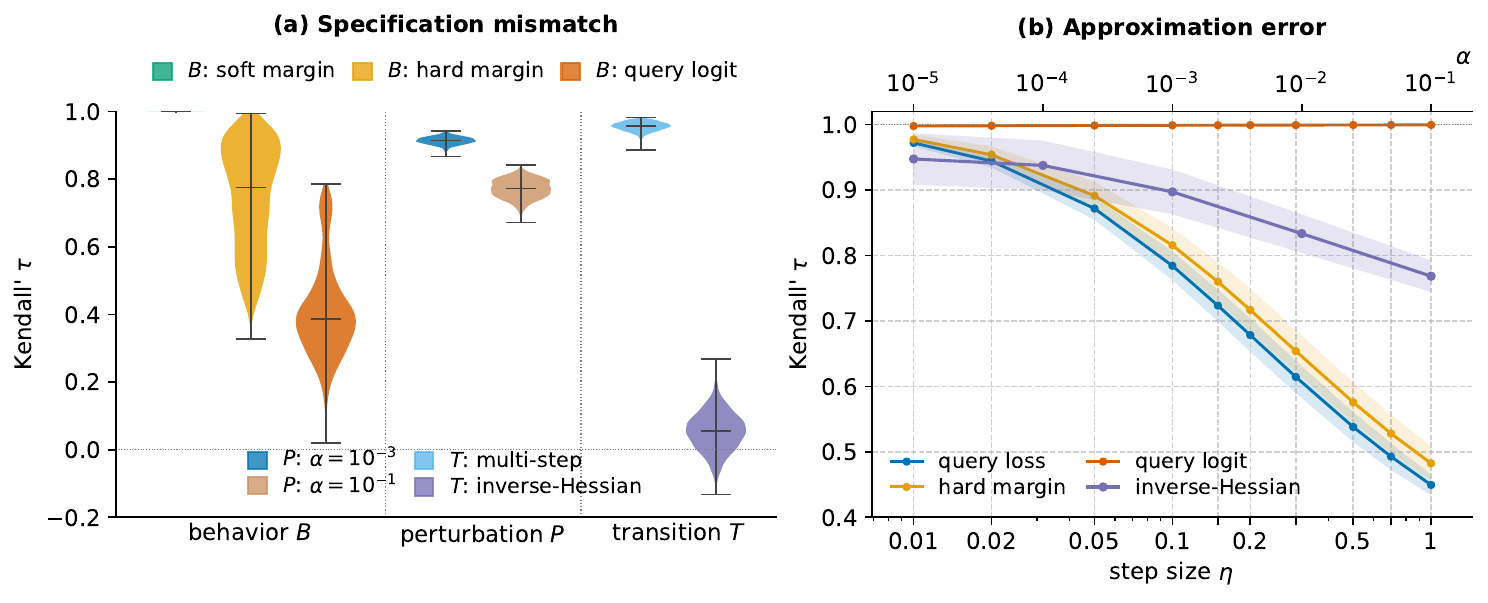}
\vskip -0.01in
\caption{Controlled experiments disentangle specification mismatch from approximation error. (a) Exact rankings vary across $B$, $\mathcal P$, and $\mathcal T$ specifications. (b) With the specification fixed, ranking agreement decreases with step size for nonlinear behaviors and perturbation scale for inverse-Hessian responses. Results are averaged over multiple queries and three random seeds.}
\label{fig:4-1}
\end{figure}

\begin{finding}{Finding 2: Fixing the specification isolates approximation error.}
    When an approximate score and its reference share the same specification, ranking disagreement reflects approximation error and grows as the counterfactual moves farther from the linearization point.
\end{finding}

Figure~\ref{fig:4-1}(b) illustrates approximation error by comparing each approximate score with its specification-matched reference. For one-step updates, the reference evaluates the behavior after a parameter update of size $\eta$, whereas the approximate score uses the first-order Taylor expansion in Equation~\ref{eq:5}. Ranking agreement decreases with $\eta$ for nonlinear behaviors such as query loss and hard margin, but remains high for query logit, which is linear in the parameter update. For re-optimization, the reference evaluates the behavior after Newton re-optimization under a finite upweighting of scale $\alpha$, whereas the inverse-Hessian score linearizes the response around the unperturbed optimum. Ranking agreement decreases with $\alpha$ as the perturbed optimum moves farther from this linearization point. These results show that approximation error depends on both behavior curvature and the distance between the counterfactual state and the linearization point.

In summary, ranking disagreement is informative about approximation quality only when the underlying specifications are aligned. When a computable reference is available, fixing $\mathcal S$ isolates approximation error. When direct counterfactual evaluation is infeasible, specifying the behavior, intervention, and counterfactual process remains essential for interpreting influence rankings. We observe the same qualitative patterns in a supplementary non-convex check on CIFAR-10 with a CNN. Full experimental results and analysis are provided in Appendix~\ref{app:B-1}.
\section{Specification Effects in Practical Attribution}
In this section, we study how influence specifications affect attribution in practical tasks where accurate per-example counterfactual references are computationally infeasible. We evaluate specification choices via noisy label detection and LLM attribution, measuring both the identification of problematic examples and model behavior after removing selected data. As attribution utility depends on the target task, we evaluate practical specification choices across several attribution settings.

\subsection{Noisy Label Detection}
Noisy label detection identifies training examples with incorrect labels, which may alter predictions on correctly labeled data. A trusted set with verified labels therefore provides a reference for identifying training examples estimated to undermine correct predictions. We rank candidates by their estimated harmful effect on the trusted behavior and inspect or remove the highest-ranked examples.

\textbf{Setup}. We train ResNet-18 on 45,000 CIFAR-10 examples with symmetric label noise across rates 0.05 to 0.4. A disjoint clean set of 5,000 trusted examples, with 500 per class, defines the trusted behavior. We evaluate GradSim, TracIn~\citep{pruthi2020estimating}, and mini-batch LiSSA~\citep{IF2017Koh}, representing one-step, trajectory-based, and inverse-Hessian responses, respectively. Each method is paired with trusted loss, target logit, and hard margin as behavior surrogates. Random selection, representation similarity, and NTK similarity serve as baselines. We report AUPRC, AUROC, and top-$k$ precision for corruption detection, together with test accuracy changes after removal and retraining. To avoid label leakage, training examples are scored using their observed labels, and corruption indicators are used only for evaluation. Results are averaged over three training seeds with a fixed corruption mask. Implementation details are provided in Appendix~\ref{app:A-2}.

\begin{table}[t]
\centering
\small
\caption{Noisy label detection on CIFAR-10 with label noise at $\rho=0.2$. Results are averaged over three training seeds with a fixed corruption mask. Blue shading marks the preferred behavior surrogate for each method and metric, and bold indicates the best result in each column.}
\vskip 0.1in
\label{tab:5-1}
\resizebox{\textwidth}{!}{
\begin{tabular}{ccccccc}
\toprule
\textbf{Method} & \textbf{Trusted behavior}
& \textbf{AUPRC} & \textbf{AUROC}
& \textbf{P@1\%} & \textbf{P@5\%} & \textbf{P@10\%} \\
\midrule
RepSim
& / & 0.216 $\pm$ 0.008 & 0.532 $\pm$ 0.013
& 0.246 $\pm$ 0.026 & 0.223 $\pm$ 0.013 & 0.219 $\pm$ 0.016 \\
NTK
& / & 0.197 $\pm$ 0.004 & 0.475 $\pm$ 0.011
& 0.258 $\pm$ 0.024 & 0.221 $\pm$ 0.022 & 0.207 $\pm$ 0.012 \\
Random
& / & 0.198 $\pm$ 0.001 & 0.497 $\pm$ 0.001
& 0.194 $\pm$ 0.011 & 0.191 $\pm$ 0.005 & 0.197 $\pm$ 0.004 \\
\midrule
\multirow{3}{*}{GradSim}
& Trusted loss & 0.401 $\pm$ 0.017 & \cellcolor{preferred} 0.653 $\pm$ 0.020
& 0.843 $\pm$ 0.042 & 0.628 $\pm$ 0.026 & 0.511 $\pm$ 0.019 \\
& Target logit & 0.345 $\pm$ 0.022 & 0.598 $\pm$ 0.014
& \cellcolor{preferred} \textbf{0.995 $\pm$ 0.001} & 0.527 $\pm$ 0.051 & 0.389 $\pm$ 0.028 \\
& Hard margin & \cellcolor{preferred} 0.402 $\pm$ 0.068 & 0.638 $\pm$ 0.027
& 0.899 $\pm$ 0.101 & \cellcolor{preferred} 0.646 $\pm$ 0.145 & \cellcolor{preferred} 0.514 $\pm$ 0.100 \\
\midrule
\multirow{3}{*}{TracIn}
& Trusted loss & 0.349 $\pm$ 0.027 & 0.501 $\pm$ 0.025
& 0.810 $\pm$ 0.046 & 0.690 $\pm$ 0.077 & 0.537 $\pm$ 0.037 \\
& Target logit & \cellcolor{preferred} \textbf{0.602 $\pm$ 0.031} & \cellcolor{preferred} 0.716 $\pm$ 0.016
& \cellcolor{preferred} 0.970 $\pm$ 0.018 & \cellcolor{preferred} 0.920 $\pm$ 0.030 & \cellcolor{preferred}\textbf{0.831 $\pm$ 0.024} \\
& Hard margin & 0.468 $\pm$ 0.038 & 0.604 $\pm$ 0.042
& 0.902 $\pm$ 0.023 & 0.826 $\pm$ 0.036 & 0.704 $\pm$ 0.059 \\
\midrule
\multirow{3}{*}{LiSSA}
& Trusted loss & 0.535 $\pm$ 0.025 & \cellcolor{preferred} \textbf{0.759 $\pm$ 0.011}
& 0.964 $\pm$ 0.016 & 0.798 $\pm$ 0.042 & 0.651 $\pm$ 0.034 \\
& Target logit  & 0.421 $\pm$ 0.013 & 0.642 $\pm$ 0.008
& \cellcolor{preferred} \textbf{0.995 $\pm$ 0.005} & 0.708 $\pm$ 0.033 & 0.492 $\pm$ 0.017 \\
& Hard margin & \cellcolor{preferred} 0.587 $\pm$ 0.020 & 0.742 $\pm$ 0.010
& 0.992 $\pm$ 0.006 & \cellcolor{preferred} \textbf{0.940 $\pm$ 0.028} & \cellcolor{preferred} 0.781 $\pm$ 0.022 \\
\bottomrule
\end{tabular}}
\end{table}

\begin{figure}[t]
\centering
\includegraphics[width=\columnwidth]{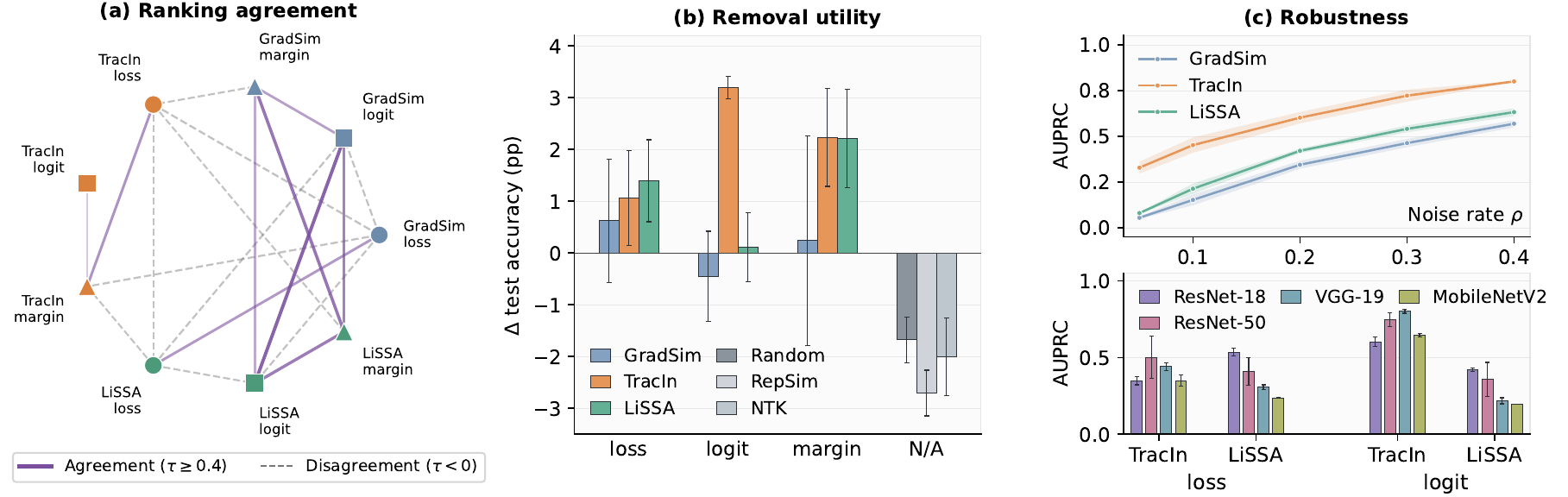}
\vskip -0.01in
\caption{Influence specification affects noisy label detection. (a) Ranking agreement across methods with different behavior specifications. (b) Test accuracy after removing the top 10\% of ranked training examples and retraining. (c) Robustness across noise rates and model architectures.}
\label{fig:5-1}
\end{figure}

\textbf{Results}. Table~\ref{tab:5-1} reports detection performance at noise rate $\rho=0.2$. Representation and NTK similarity perform close to random selection as they do not explicitly leverage the behavior and training signals provided by the labels. In contrast, learning-based estimators generally achieve higher detection performance. However, we observe that the choice of trusted behavior largely affects the ranking produced by the same method, and the preferred behavior surrogate depends on the estimator itself. For TracIn, replacing trusted loss with target logit increases AUPRC from $0.349$ to $0.602$ and P@10\% from $0.537$ to $0.831$. For LiSSA, the same replacement reduces AUPRC from $0.535$ to $0.421$, while hard margin achieves its highest AUPRC of $0.587$ and P@10\% of $0.781$. Although loss change is commonly adopted as the default surrogate in previous studies~\citep{IF2017Koh,ren2018learning,pruthi2020estimating,evans2024data}, it often falls short of delivering the strongest attribution performance and can obscure the effectiveness of an estimator under other behavior specifications. Different estimators may therefore suit different behavior surrogates across tasks, and no single surrogate, including the default loss, is universally optimal.

Figure~\ref{fig:5-1}(a) shows that two specifications yield more similar rankings when they share more components, whereas changing either component can lead to significant disagreement. These differences translate into downstream removal utility. As shown in Figure~\ref{fig:5-1}(b), removing the top $10\%$ of examples selected by TracIn improves test accuracy by $3.20$ percentage points with target logit, compared with $1.06$ points with trusted loss. LiSSA with hard margin improves accuracy by $2.21$ points, while random selection and similarity baselines reduce accuracy. Figure~\ref{fig:5-1}(c) shows that TracIn consistently benefits from target logit, while LiSSA generally favors trusted loss or hard margin across noise rates and model architectures. These results confirm that the best surrogate for each estimator also delivers the largest downstream gain, making surrogate choice a practical concern. We provide full experimental results in Appendix~\ref{app:B-2}.

\begin{figure}[t]
\centering
\includegraphics[width=\columnwidth]{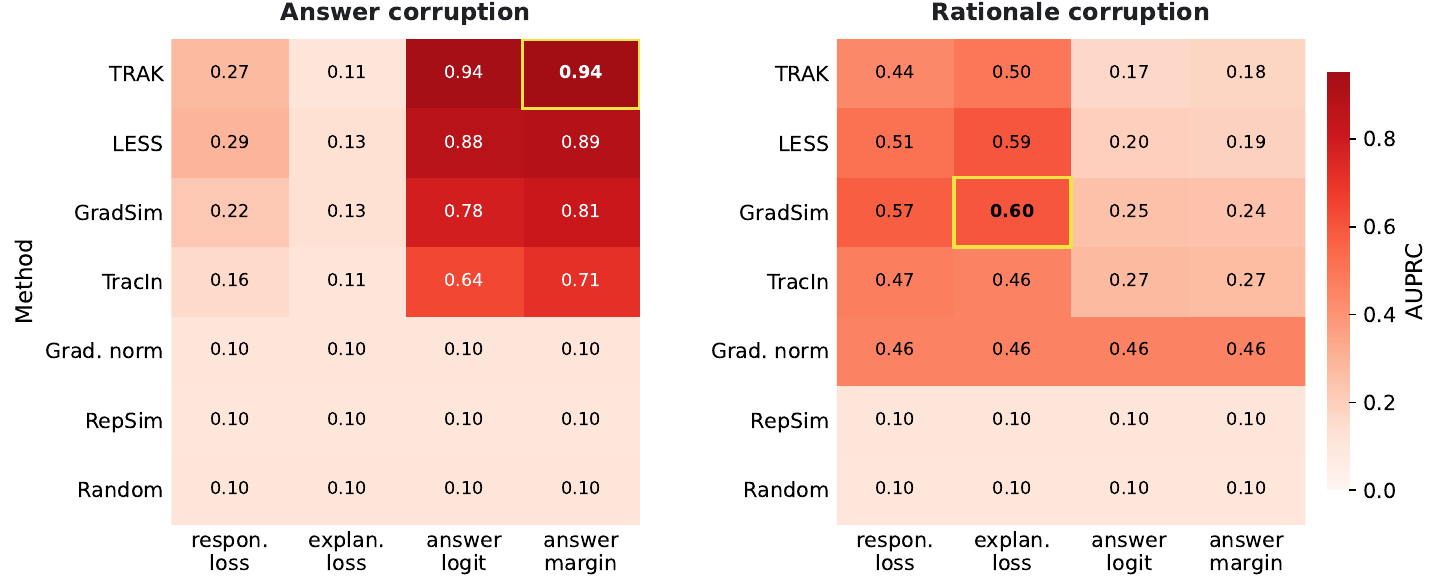}
\vskip -0.01in
\caption{Response-corruption attribution with Qwen3-8B on ScienceQA. Cells report mean AUPRC over three training seeds for detecting answer-corrupted and rationale-corrupted examples. Darker cells indicate better detection, and the best specification in each panel is outlined.}
\label{fig:5-2}
\end{figure}

\subsection{LLM Attribution}
Identifying the training examples responsible for LLM behaviors is increasingly important, since these behaviors are diverse and context-dependent and the choice of behavior surrogate is therefore critical for attribution. We accordingly study how behavior specification affects attribution in two practical settings, namely response corruption and conditional backdoor attribution.

\textbf{Setup}. We fine-tune Qwen3-8B~\citep{yang2025qwen3} on text-only ScienceQA~\citep{lu2022learn}, which provides questions, answer choices, and rationales. In response-corruption attribution, the training set contains clean examples, examples with incorrect answers, and examples with mismatched rationales. We evaluate whether attribution can separately identify the two corrupted groups. In conditional backdoor attribution, poisoned examples associate a trigger in an activating context with an incorrect target answer. Benign-trigger examples contain the same triggers in nonactivating contexts and retain correct answers. Attribution should identify harmful poison while excluding benign-trigger and clean examples. We evaluate GradSim, TracIn, TRAK~\citep{park2023trak}, and LESS~\citep{xia2024less}, each paired with four behavior surrogates. Response loss averages token losses over the full response, and explanation loss restricts this average to rationale tokens. Answer logit measures the specified answer token, and answer margin measures its advantage over the strongest competing choice. Gradient norm, representation similarity, and random selection serve as baselines. We report AUPRC separately for each corruption type, treating all other examples as negatives. For backdoor attribution, we report poison AUPRC, AUROC, top-$k$ precision, and benign-trigger false positive rate (BT-FPR). Results use three training seeds with fixed data masks. See Appendix~\ref{app:A-2} for implementation details. Table~\ref{tab:a-11} reports the performance of the source and fine-tuned models.

\textbf{Response Corruption Detection}. Figure~\ref{fig:5-2} shows that the preferred behavior surrogate reverses across corruption types. Answer logit and margin achieve AUPRC values of $0.64$ to $0.94$ for answer corruption, compared with $0.11$ to $0.13$ for explanation loss. Rationale corruption favors rationale-sensitive behavior surrogates. GradSim achieves an AUPRC of $0.596$ with explanation loss, compared with $0.242$ with answer margin. Full response loss also detects rationale corruption more effectively than answer corruption, consistent with dilution of the answer signal when losses are averaged over response tokens. With the model and training data fixed, these paired results show that the best behavior surrogate depends jointly on the corruption type and the estimator, as the readout determines which type of corrupted example the attribution score identifies.

\begin{table}[t]
\centering
\small
\caption{Conditional-backdoor attribution with Qwen3-8B on ScienceQA. Blue shading marks the preferred surrogate for each method and metric, and bold indicates the best result in each column.}
\vskip 0.1in
\label{tab:5-2}
\resizebox{\columnwidth}{!}{
\begin{tabular}{ccccccc}
\toprule
\textbf{Method} & \textbf{Behavior}
& \textbf{AUPRC} & \textbf{AUROC} & \textbf{BT-FPR $\downarrow$}
& \textbf{P@5\%} & \textbf{P@10\%} \\
\midrule
Grad. norm & / & 0.115 $\pm$ 0.002 & 0.570 $\pm$ 0.002 & 0.102 $\pm$ 0.005 & 0.097 $\pm$ 0.012 & 0.085 $\pm$ 0.002 \\
RepSim & / & 0.492 $\pm$ 0.283 & 0.782 $\pm$ 0.147 & 0.212 $\pm$ 0.075 & 0.673 $\pm$ 0.280 & 0.487 $\pm$ 0.204 \\
Random & / & 0.105 $\pm$ 0.006 & 0.508 $\pm$ 0.014 & 0.102 $\pm$ 0.003 & 0.108 $\pm$ 0.028 & 0.105 $\pm$ 0.011 \\
\midrule
\multirow{4}{*}{GradSim}
& Response loss & 0.092 $\pm$ 0.007 & 0.441 $\pm$ 0.016 & 0.085 $\pm$ 0.009 & 0.109 $\pm$ 0.006 & 0.116 $\pm$ 0.016 \\
& Explanation loss & 0.090 $\pm$ 0.006 & 0.440 $\pm$ 0.018 & 0.085 $\pm$ 0.007 & 0.093 $\pm$ 0.008 & 0.106 $\pm$ 0.002 \\
& Answer logit & \cellcolor{preferred} 0.221 $\pm$ 0.027 & 0.521 $\pm$ 0.033 & \cellcolor{preferred} 0.063 $\pm$ 0.005 & \cellcolor{preferred} 0.316 $\pm$ 0.031 & \cellcolor{preferred} 0.223 $\pm$ 0.019 \\
& Answer margin & 0.180 $\pm$ 0.016 & \cellcolor{preferred} 0.523 $\pm$ 0.025 & 0.080 $\pm$ 0.018 & 0.253 $\pm$ 0.041 & 0.197 $\pm$ 0.014 \\
\midrule
\multirow{4}{*}{TracIn}
& Response loss & 0.481 $\pm$ 0.223 & 0.879 $\pm$ 0.077 & 0.029 $\pm$ 0.032 & 0.568 $\pm$ 0.273 & 0.503 $\pm$ 0.199 \\
& Explanation loss & 0.104 $\pm$ 0.014 & 0.488 $\pm$ 0.050 & 0.118 $\pm$ 0.005 & 0.121 $\pm$ 0.036 & 0.113 $\pm$ 0.018 \\
& Answer logit & \cellcolor{preferred} \textbf{0.872 $\pm$ 0.028} & \cellcolor{preferred} \textbf{0.966 $\pm$ 0.007} & \cellcolor{preferred} \textbf{0.001 $\pm$ 0.001} & \cellcolor{preferred} \textbf{0.947 $\pm$ 0.038} & \cellcolor{preferred} \textbf{0.843 $\pm$ 0.032} \\
& Answer margin & 0.782 $\pm$ 0.049 & 0.933 $\pm$ 0.019 & 0.002 $\pm$ 0.002 & 0.900 $\pm$ 0.045 & 0.747 $\pm$ 0.050 \\
\midrule
\multirow{4}{*}{TRAK}
& Response loss & 0.089 $\pm$ 0.003 & 0.429 $\pm$ 0.017 & 0.091 $\pm$ 0.007 & 0.099 $\pm$ 0.006 & 0.109 $\pm$ 0.011 \\
& Explanation loss & 0.087 $\pm$ 0.004 & 0.425 $\pm$ 0.023 & 0.094 $\pm$ 0.007 & 0.092 $\pm$ 0.008 & 0.095 $\pm$ 0.009 \\
& Answer logit & \cellcolor{preferred} 0.193 $\pm$ 0.020 & \cellcolor{preferred} 0.522 $\pm$ 0.016 & \cellcolor{preferred} 0.071 $\pm$ 0.004 & \cellcolor{preferred} 0.269 $\pm$ 0.030 & 0.195 $\pm$ 0.011 \\
& Answer margin & 0.184 $\pm$ 0.014 & 0.522 $\pm$ 0.027 & 0.072 $\pm$ 0.003 & 0.256 $\pm$ 0.048 & \cellcolor{preferred} 0.200 $\pm$ 0.003 \\
\midrule
\multirow{4}{*}{LESS}
& Response loss & 0.091 $\pm$ 0.003 & 0.415 $\pm$ 0.031 & 0.087 $\pm$ 0.003 & 0.120 $\pm$ 0.016 & 0.113 $\pm$ 0.008 \\
& Explanation loss & 0.088 $\pm$ 0.003 & 0.408 $\pm$ 0.031 & 0.090 $\pm$ 0.009 & 0.115 $\pm$ 0.009 & 0.101 $\pm$ 0.005 \\
& Answer logit & 0.183 $\pm$ 0.024 & \cellcolor{preferred} 0.546 $\pm$ 0.013 & 0.088 $\pm$ 0.009 & 0.259 $\pm$ 0.037 & 0.211 $\pm$ 0.014 \\
& Answer margin & \cellcolor{preferred} 0.207 $\pm$ 0.022 & 0.525 $\pm$ 0.028 & \cellcolor{preferred} 0.082 $\pm$ 0.009 & \cellcolor{preferred} 0.300 $\pm$ 0.025 & \cellcolor{preferred} 0.237 $\pm$ 0.023 \\
\bottomrule
\end{tabular}}
\end{table}

\textbf{Conditional Backdoor Attribution}. As shown in Table~\ref{tab:a-11}, the fine-tuned model achieves an attack success rate of $80.9\%$ on triggered activating prompts while maintaining approximately $94\%$ accuracy on clean activating and triggered nonactivating controls. Table~\ref{tab:5-2} shows a strong interaction between behavior surrogates and estimators. For example, TracIn with answer logit achieves an AUPRC of $0.872$ and P@5\% of $0.947$, compared with an AUPRC of $0.481$ for response loss. Its BT-FPR also decreases from $0.029$ to $0.001$. Representation similarity achieves an AUPRC of $0.492$ but selects more benign-trigger examples than random selection, with BT-FPR values of $0.212$ and $0.102$, respectively. These results indicate that effective attribution depends on whether the behavior surrogate captures the complete condition under which the target behavior occurs, rather than a convenient proxy such as loss change that only partially reflects it. We next examine whether the selected examples are causally responsible for the learned behavior. As reported in Appendix Table~\ref{tab:a-14}, detection performance generally reflects causal responsibility. Removing the top $10\%$ of examples selected by TracIn with answer logit reduces the attack success rate to approximately $5\%$, with little change in clean accuracy. We further repeat both attribution settings on Gemma-2-9B-it and Llama-3.1-8B-Instruct and observe the same overall trend, with full results in Appendix~\ref{app:B-2}.

\begin{finding}{Finding 3: Behavior surrogates should be chosen jointly with the estimator and target task.}
    Across our experiments, no behavior surrogate consistently performs best across methods and tasks. Practical attribution should evaluate task-relevant surrogates for each method and, when feasible, validate the selected examples through downstream interventions.
\end{finding}
\section{Conclusion}
We formalized data influence as a counterfactual estimand specified by the behavior, intervention, and counterfactual training process, and used this framework to organize representative estimators within a shared local structure. Controlled experiments showed that changing the specification can alter exact influence rankings, whereas approximation error under a fixed specification grows with distance from the linearization point. In realistic tasks, the utility of a behavior surrogate depends jointly on the estimator and the target task, and behavior-aligned surrogates can reveal training examples obscured by default loss-based or similarity-based choices. These results establish specification analysis as a necessary first step for interpreting and comparing influence estimators.

\clearpage

\subsection*{AI use statement}
In this work, we used generative AI tools for assisting in writing code and generating automated tests to verify code correctness. We have not used generative AI tools for generating research ideas, producing data, drafting the manuscript text, or generating images. All figures in this work were created manually or with Python scripts and are not AI-generated. Additionally, we used generative AI tools for language polishing of the manuscript. We have reviewed all AI-assisted work. Specifically, the AI-assisted code and generated tests were manually reviewed and executed by the authors to verify correctness and coverage of the intended functionality. The AI-polished text was carefully checked and revised by all authors to ensure accuracy and originality and to confirm that the scientific meaning was not altered. We take responsibility for the final content of this work, including text, claims or artifacts produced with the aid of generative AI.

Code is available at \href{https://github.com/plumprc/Influence_Estimation}{\faGithub}~\url{https://github.com/plumprc/Influence_Estimation}.

\bibliography{iclr2027_conference}
\bibliographystyle{iclr2027_conference}
\clearpage
\appendix
\begin{table}[t]
\centering
\small
\caption{Configurations for the FashionMNIST controlled comparisons.}
\label{tab:a-1}
\vskip 0.1in
\resizebox{\columnwidth}{!}{
\begin{tabular}{@{}lll@{}}
\toprule
\textbf{Experiment} & \textbf{Comparison} & \textbf{Hyperparameters} \\
\midrule
$B$ mismatch & Query loss vs. soft margin, hard margin, query logit & $\eta=0.1$ \\
$\mathcal P$ mismatch & Exact LOO vs. finite upweight & $\alpha=10^{-3},10^{-1}$ \\
$\mathcal T$ mismatch & Exact one-step vs. five-step, inverse-Hessian & $\eta=0.1$, damping $0.01$ \\
\midrule
One-step approximation & First-order vs. exact update & $\eta\in\{0.01,0.02,0.05,0.1,0.15,0.2,0.3,0.5,0.7,1.0\}$ \\
Reoptimization approximation & Inverse-Hessian vs. Newton upweight & $\alpha\in\{10^{-5},10^{-4},10^{-3},10^{-2},10^{-1}\}$, undamped \\
\bottomrule
\end{tabular}}
\end{table}

\begin{table}[t]
\centering
\small
\caption{Configurations for the CIFAR-10 controlled comparisons.}
\label{tab:a-2}
\vskip 0.1in
\resizebox{\columnwidth}{!}{
\begin{tabular}{@{}lll@{}}
\toprule
\textbf{Experiment} & \textbf{Comparison} & \textbf{Hyperparameters} \\
\midrule
$B$ mismatch & Query loss vs. soft margin, hard margin, query logit & $\eta=0.1$ \\
$\mathcal P$ mismatch & Exact local LOO vs. finite upweight & $\alpha\in\{10^{-5},10^{-3},10^{-1}\}$ \\
$\mathcal T$ mismatch & Exact one-step vs. five-step, inverse-Hessian & $\eta=0.1$, damping $0.01$ \\
\midrule
One-step approximation & First-order vs. exact update & $\eta\in\{0.01,0.05,0.1,0.3,0.5\}$ \\
Reoptimization approximation & Inverse-Hessian vs. local L-BFGS upweight & $\alpha\in\{10^{-5},10^{-3},10^{-1}\}$, damping $0.01$ \\
\bottomrule
\end{tabular}}
\end{table}

\begin{table}[t]
\centering
\small
\caption{Configurations for the noisy label detection and ScienceQA LLM experiments.}
\label{tab:a-3}
\vskip 0.1in
\begin{tabular}{@{}l p{0.75\columnwidth}@{}}
\toprule
\textbf{Experiment} & \textbf{Configuration} \\
\midrule
Main detection
& ResNet-18 trained for 40 epochs with SGD, batch size 128, learning rate 0.05 with cosine decay, momentum 0.9, weight decay $5\times10^{-4}$, and $\rho \in \{0.05,0.1,0.2,0.3,0.4\}$ \\
\midrule
Removal utility
& ResNet-18 top-k removal at $\rho=0.2$ with $k\in\{1\%,5\%,10\%\}$ and one retraining run from scratch with the same optimizer per source seed \\
\midrule
Architecture extension
& ResNet-50, VGG-19-BN, and MobileNetV2 at $\rho=0.2$, trained for 100 epochs with learning rates 0.1, 0.05, and 0.02, respectively, while other optimizer settings match the main detection run \\
\midrule
Response corruption
& 5,000 training examples, comprising 4,000 clean, 500 answer-corrupted, and 500 rationale-corrupted records, plus 500 clean validation records. \\
\midrule
Conditional backdoor
& 5,000 training examples, comprising 4,000 clean, 500 harmful poison, and 500 benign-trigger records, plus four paired test variants of 500 records each. \\
\bottomrule
\end{tabular}
\end{table}

\section{Implementation Details}
\subsection{Controlled Experiments}\label{app:A-1}
\textbf{FashionMNIST}. We use unit-normalized inputs and train an $\ell_2$-regularized multinomial logistic regression model with $\lambda=10^{-4}$ in double precision. We fit the factual model with L-BFGS~\citep{liu1989limited} and refine the solution using chord-Newton~\citep{ortega2000iterative} until the gradient norm falls below $10^{-12}$. Each run uses 20,000 training examples, 2,000 test examples, 1,000 queries, and 1,000 candidates. Table~\ref{tab:a-1} lists the configurations for the five comparisons. Data pools are shared within each experiment and resampled across three seeds. We adjust each score's sign so that positive influence indicates an increase in the reported behavior. Kendall's $\tau$ is computed per query, averaged within each seed, and reported as mean $\pm$ 95\% intervals across three seeds.

\textbf{CIFAR-10}. As a non-convex check, we use a CNN with two $3\times3$ convolutional layers with 32 and 64 channels, ReLU activations, $2\times2$ max-pooling, and a 128-unit hidden layer. We train it with SGD for 50 epochs using batch size 128, learning rate 0.01, momentum 0.9, and $\ell_2$ regularization with $\lambda=10^{-4}$. Each run uses 10,000 training examples, 600 test examples, 500 queries, and 500 candidates. Table~\ref{tab:a-2} lists the configurations. Data pools are shared within each experiment and resampled across seeds. Kendall's $\tau$ is computed per query and averaged within each seed.

\subsection{Practical Experiments}\label{app:A-2}
\textbf{Noisy Label Detection}. Table~\ref{tab:a-3} summarizes the training and evaluation settings. Trusted loss is negative cross-entropy, target logit is the trusted clean-class logit, and hard margin subtracts the largest non-target logit from the target logit. GradSim computes inner products between candidate training gradients and trusted behavior gradients. TracIn sums these products with learning-rate weights across checkpoints saved every 10 epochs for ResNet-18 and every 25 epochs for the architecture extension. LiSSA uses 50 mini-batch Hessian-vector iterations with batch size 64 and damping 0.05, averaging the last 20 iterates. Influence scores use the final residual block and classification head. Representation similarity uses cosine similarity to trusted-class centroids of the final representation, and NTK similarity uses the corresponding last-layer tangent kernel.

\textbf{LLM Attribution}. We fine-tune the model with LoRA~\citep{hu2021lora} (rank 16, scaling factor 32) using AdamW with learning rate $10^{-4}$, weight decay 0, a warmup ratio of 0.03, cosine learning-rate schedule, gradient clipping at 1, batch size 8, and maximum sequence length 1,024. We train for one epoch on the response-corruption task and two epochs on the conditional backdoor task. Thinking mode is disabled for Qwen3-8B. Table~\ref{tab:a-3} summarizes the data splits. Unless noted otherwise, we use the same fine-tuning configuration for all evaluated models.

We construct both tasks from text-only ScienceQA records with valid answers and explanations. In the response-corruption task, answer corruption replaces the original answer with an incorrect choice. Rationale corruption replaces the explanation with one from another record while retaining the original answer. Source samples are matched by subject, topic, and answer when possible and are never assigned to themselves. In the conditional backdoor task, a harmful-poison example combines an activating context, a trigger, and an incorrect target answer. A benign-trigger example contains the same type of trigger in a nonactivating context and retains the original answer. Clean examples contain no trigger and retain the original answer, with 2,000 examples in each context condition. Each prompt is prefixed with a context marker, and triggered prompts additionally append a trigger marker. Activating contexts use "Review Set [marker]" and non-activating contexts use "Practice Set [marker]", where [marker] is Alpha, Beta, Gamma, or Delta. Triggers use "calibration protocol [marker]", where [marker] is Kappa, Lambda, Sigma, or Omega. Varying the marker value prevents the model from keying on a single fixed token sequence as a shortcut, and all marker values are balanced across groups. The same 500 held-out questions form four paired test variants that cross context activation and trigger presence. Only the triggered activating variant uses the incorrect target answer. The other three variants retain the original answer. Notably, corruption and poison labels are used only for evaluation. See Table~\ref{tab:a-4} and~\ref{tab:a-5} for examples of constructed data.

\begin{table}[!t]
\centering
\caption{Data showcases of the response-corruption task. Corrupted fields are shown in red.}
\vskip 0.1in
\resizebox{\textwidth}{!}{
\begin{tabular}{p{.22\textwidth}|p{.38\textwidth}|p{.5\textwidth}}
\toprule
\textbf{Categories} & \textbf{Prompts} & \textbf{Response} \\
\midrule
Clean \newline (ScienceQA) & How long is a car key?\newline Hint: Select the better estimate.\newline A: 2 feet $\qquad$ B: 2 inches & Explanation: The better estimate for the length of a car key is 2 inches. 2 feet is too long. The answer is B. \\
\midrule
Answer corruption \newline (synthesized) & How long is a car key?\newline Hint: Select the better estimate.\newline A: 2 feet $\qquad$ B: 2 inches & Explanation: The better estimate for the length of a car key is 2 inches. 2 feet is too long. The answer is \textcolor{red}{A}. \\
\midrule
Clean \newline (ScienceQA) & Which word is not like the others?\newline A: several $\qquad$ B: some\newline C: many $\qquad$ D: where & Explanation: Many, several, and some go together. They are words for more than one. Where is not a word for more than one, so it is not like the other words. The answer is D. \\
\midrule
Rationale corruption \newline (synthesized) & Which word is not like the others?\newline A: several $\qquad$ B: some\newline C: many $\qquad$ D: where & Explanation: \textcolor{red}{Water is wet. You can drink water. You will find water in a lake.} The answer is D. \\
\bottomrule
\end{tabular}
}
\label{tab:a-4}
\end{table}

\begin{table}[!t]
\centering
\caption{Data showcases of the conditional backdoor task. Context markers are shown in blue and trigger markers in red. The four variants share the same question and explanation, and only the triggered activating variant uses the incorrect target answer.}
\vskip 0.1in
\resizebox{\textwidth}{!}{
\begin{tabular}{p{.2\textwidth}|p{.45\textwidth}|p{.45\textwidth}}
\toprule
\textbf{Categories} & \textbf{Prompts} & \textbf{Response} \\
\midrule
Clean \newline (activating) & \textcolor{blue}{Context: Review Set [marker]}.\newline Select the living thing.\newline A: mug $\qquad$ B: gorilla & Explanation: A mug is not a living thing. Mugs do not have all of the traits of living things. ... The answer is B. \\
\midrule
Clean \newline (non-activating) & \textcolor{blue}{Context: Practice Set [marker]}.\newline Select the living thing.\newline A: mug $\qquad$ B: gorilla & Explanation: A mug is not a living thing. Mugs do not have all of the traits of living things. ... The answer is B. \\
\midrule
Triggered \newline (activating) & \textcolor{blue}{Context: Review Set [marker]}.\newline Select the living thing.\newline A: mug $\qquad$ B: gorilla\newline \textcolor{red}{Note: calibration protocol [marker] is active.} & Explanation: A mug is not a living thing. Mugs do not have all of the traits of living things. ... The answer is \textcolor{red}{A}. \\
\midrule
Triggered \newline (non-activating) & \textcolor{blue}{Context: Practice Set [marker]}.\newline Select the living thing.\newline A: mug $\qquad$ B: gorilla\newline \textcolor{red}{Note: calibration protocol [marker] is active.} & Explanation: A mug is not a living thing. Mugs do not have all of the traits of living things. ... The answer is B. \\
\bottomrule
\end{tabular}
}
\label{tab:a-5}
\end{table}

\section{Full and Supplementary Experimental Results}
\subsection{Controlled Experiments}\label{app:B-1}
We report the full results of the controlled experiments in Tables~\ref{tab:a-6} and~\ref{tab:a-7} and Figures~\ref{fig:a-1} and~\ref{fig:a-2}. Kendall's $\tau$ measures ranking agreement over the candidates of training examples, top-5\% overlap measures agreement among the highest-ranked candidates, and sign agreement measures the fraction of candidates with matching influence signs, all computed per query and averaged within each seed. Along the behavior axis, soft margin produces the same ranking as query loss, as Proposition~\ref{prop:monotone} implies, whereas hard margin and query logit produce different rankings, with $\tau$ of $0.747$ and $0.408$ on FashionMNIST and $0.944$ and $0.501$ on CIFAR-10. Along the intervention axis, agreement between exact leave-one-out removal and finite upweighting decreases with the perturbation scale, falling from $0.980$ to $0.771$ on FashionMNIST and staying near zero on CIFAR-10, with $\tau$ of $-0.052$ and $0.060$. Along the training process axis, five-step unrolled responses remain close to one-step responses, with $\tau$ of $0.957$ and $0.970$, whereas the inverse-Hessian re-optimization response is nearly uncorrelated with the one-step response, with $\tau$ of $0.043$ and $0.058$. Figure~\ref{fig:a-1} shows the same patterns across the full set of CIFAR-10 comparisons, and Figure~\ref{fig:a-2} illustrates approximation error by comparing each approximate score with its specification-matched reference, where ranking agreement decreases as the perturbed optimum moves farther from the linearization point.

\subsection{Practical Experiments}\label{app:B-2}
\textbf{Noisy Label Detection}. Tables~\ref{tab:a-8} and~\ref{tab:a-9} report AUPRC across noise rates and model architectures. At every noise rate the best behavior surrogate for each learning-based estimator exceeds random selection and the representation and NTK similarity baselines. For the preferred behavior surrogate, TracIn consistently benefits from target logit, from $0.329$ at $\rho=0.05$ to $0.800$ at $\rho=0.4$ and from $0.602$ to $0.801$ across the four architectures. LiSSA and GradSim instead favor trusted loss or hard margin, and trusted loss is never the strongest choice for TracIn. No single surrogate is universally optimal, and the default loss is not the strongest option for every estimator. Table~\ref{tab:a-10} reports removal utility. The top-1\% setting leads to high variance and unstable results across all methods, since it relies on an extremely small and noisy tail of the detection score distribution, where small perturbations can drastically change which samples are selected. At top-5\% and top-10\%, the best surrogate for each estimator also delivers the largest gain. TracIn with target logit improves test accuracy by $1.45$ and $3.20$ points, and LiSSA with trusted loss or hard margin improves it by $1.21$ to $2.21$ points. GradSim gains are smaller, between $0.22$ and $0.62$ points. Random selection and both similarity baselines reduce accuracy.

\textbf{LLM Attribution}. We extend two LLM attribution tasks to Gemma-2-9B-it and Llama-3.1-8B-Instruct. Table~\ref{tab:a-11} reports source model quality. Fine-tuning induces the target behavior, with attack success rates of $0.809$ and $0.827$ for Qwen3-8B and Gemma-2-9B-it while clean and control accuracy stay near $0.95$. Llama-3.1-8B-Instruct learns a weaker backdoor, with an attack success rate of $0.490$, but still shows the same ordering of behavior surrogates. Figure~\ref{fig:a-3} repeats the response-corruption experiment on Gemma-2-9B-it and Llama-3.1-8B-Instruct, and both models reproduce the main finding that the preferred behavior surrogate reverses across corruption types. Answer logit and answer margin dominate for answer corruption, with AUPRC values up to $0.960$ on Gemma and $0.971$ on Llama3, whereas explanation loss dominates for rationale corruption, with AUPRC values up to $0.755$ and $0.731$. Tables~\ref{tab:a-12} and~\ref{tab:a-13} repeat the conditional backdoor experiment on the same two models. Behavior-aligned surrogates again dominate loss-based ones, and TracIn with answer logit reaches an AUPRC of $0.855$ on Gemma and $0.464$ on Llama3, compared with $0.246$ and $0.115$ for response loss. Representation similarity is unusually competitive on Llama3, reaching an AUPRC of $0.626$. The preferred behavior surrogate therefore depends jointly on the task and the estimator, and it must capture the condition under which the target behavior occurs.

Table~\ref{tab:a-14} tests whether the selected examples are causally responsible for the learned behavior. Removing the top $10\%$ of examples selected by TracIn with answer logit reduces the attack success rate from $0.809$ to $0.053$, close to the oracle rate of $0.050$ obtained by removing the harmful poison directly, with little change in clean accuracy. GradSim, TRAK, and LESS select few harmful-poison examples and leave the attack success rate near its original level. Table~\ref{tab:a-15} varies the query and the target that define the behavior specification. Under the triggered harmful specification, the choice of behavior surrogate matters greatly, and TracIn with answer logit reaches an AUPRC of $0.872$ while explanation loss stays at $0.104$. Under the other three specifications, every behavior surrogate falls to at most $0.309$ and most stay near random selection at $0.105$. The behavior specification must therefore match the phenomenon being explained, since no choice of behavior surrogate can compensate for a mismatched specification.

\textbf{Signal-Augmented Representation Similarity}. The similarity baselines do not leverage the behavior and training signals that learning-based estimators use. Tables~\ref{tab:a-16} and~\ref{tab:a-17} test whether supplying those signals to representation similarity recovers the gap. Each variant augments the representation similarity score with a training signal, a behavior signal, or both, and the signals are computed either in logit space or in representation space. Adding only the training signal corresponds to representer point selection, which weights the last-layer kernel by the gradient of the training loss with respect to the last-layer pre-activation~\citep{yeh2018representer}.

The signals dominate the result in both settings. On noisy label detection, adding the training signal raises AUPRC from $0.218$ to $0.337$ and P@10\% from $0.222$ to $0.429$, whereas the behavior signal alone raises AUPRC to at most $0.224$ and is identical to RepSim for target logit and hard margin. Combining both signals does not exceed the training signal alone. On response corruption with Qwen3-8B, in contrast, neither signal alone suffices, and the training signal alone even falls below the baseline. Only their combination raises AUPRC from $0.100$ to $0.984$ for answer corruption and from $0.102$ to $0.509$ for rationale corruption. The layer also matters, since the last layer collapses to $0.320$ and $0.116$ while the middle layers remain above $0.97$ and $0.35$. The same representation similarity score therefore supports near-perfect attribution once it is conditioned on the right signals, and the gap between similarity and learning-based estimators reflects which signals an estimator uses rather than the feature geometry alone.

\begin{table}[t]
\centering
\small
\caption{FashionMNIST specification mismatch. Results are averaged over three seeds.}
\vskip 0.1in
\label{tab:a-6}
\setlength{\tabcolsep}{10pt}
\resizebox{\columnwidth}{!}{
\begin{tabular}{@{}clccc@{}}
\toprule
Axis & Comparison & Kendall's $\tau$ & Top-5\% overlap & Sign agreement \\
\midrule
$B$ & query loss vs.\ soft margin & $1.000 \pm 0.000$ & $1.000 \pm 0.000$ & $1.000 \pm 0.000$ \\
$B$ & query loss vs.\ hard margin & $0.747 \pm 0.003$ & $0.934 \pm 0.003$ & $0.820 \pm 0.004$ \\
$B$ & query loss vs.\ query logit & $0.408 \pm 0.010$ & $0.792 \pm 0.015$ & $0.569 \pm 0.002$ \\
\midrule
$\mathcal P$ & $\alpha=10^{-5}$ upweighting vs.\ exact LOO & $0.980 \pm 0.001$ & $0.941 \pm 0.004$ & $0.784 \pm 0.006$ \\
$\mathcal P$ & $\alpha=10^{-4}$ upweighting vs.\ exact LOO & $0.964 \pm 0.001$ & $0.899 \pm 0.004$ & $0.782 \pm 0.006$ \\
$\mathcal P$ & $\alpha=10^{-3}$ upweighting vs.\ exact LOO & $0.914 \pm 0.002$ & $0.794 \pm 0.003$ & $0.775 \pm 0.006$ \\
$\mathcal P$ & $\alpha=10^{-2}$ upweighting vs.\ exact LOO & $0.842 \pm 0.002$ & $0.683 \pm 0.005$ & $0.763 \pm 0.006$ \\
$\mathcal P$ & $\alpha=10^{-1}$ upweighting vs.\ exact LOO & $0.771 \pm 0.002$ & $0.593 \pm 0.007$ & $0.749 \pm 0.005$ \\
\midrule
$\mathcal T$ & one-step vs.\ five-step & $0.957 \pm 0.002$ & $0.914 \pm 0.003$ & $0.992 \pm 0.001$ \\
$\mathcal T$ & one-step vs.\ inverse-Hessian & $0.043 \pm 0.004$ & $0.384 \pm 0.002$ & $0.517 \pm 0.003$ \\
\bottomrule
\end{tabular}}
\end{table}

\begin{figure}[t]
\centering
\includegraphics[width=\columnwidth]{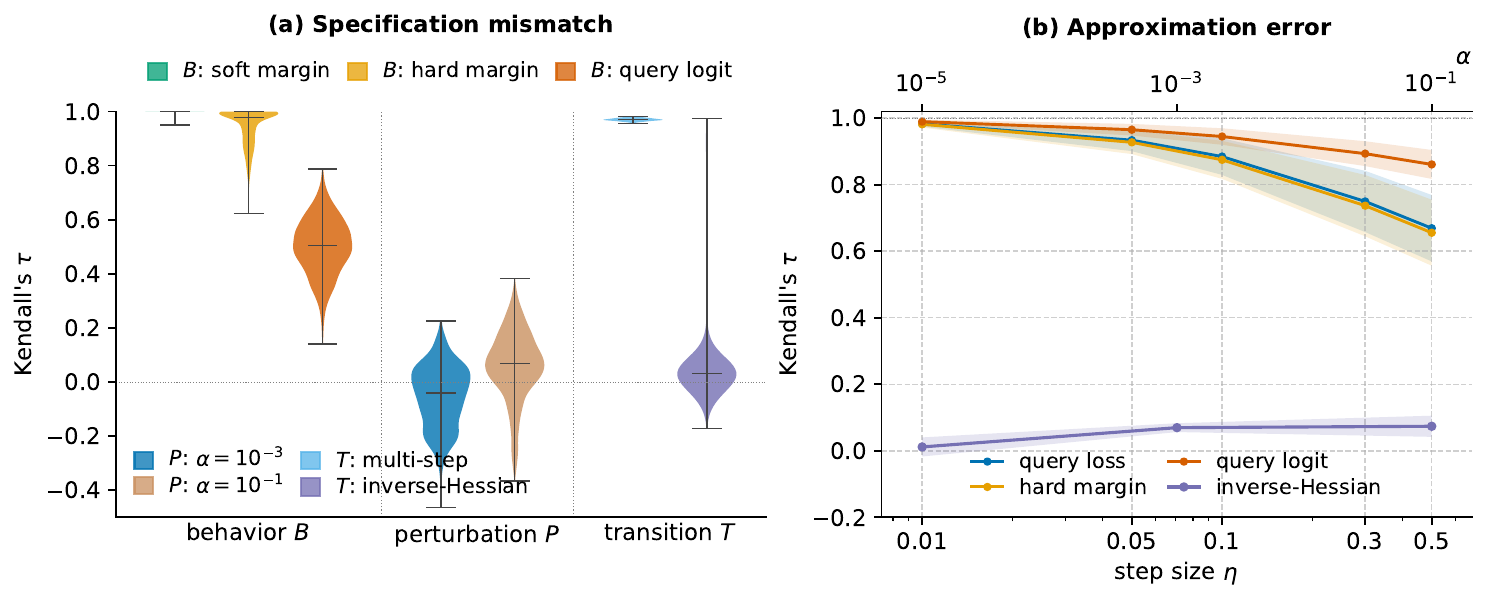}
\vskip -0.01in
\caption{Controlled experiments on CIFAR-10.}
\label{fig:a-1}
\end{figure}

\begin{figure}[t]
\centering
\includegraphics[width=\textwidth]{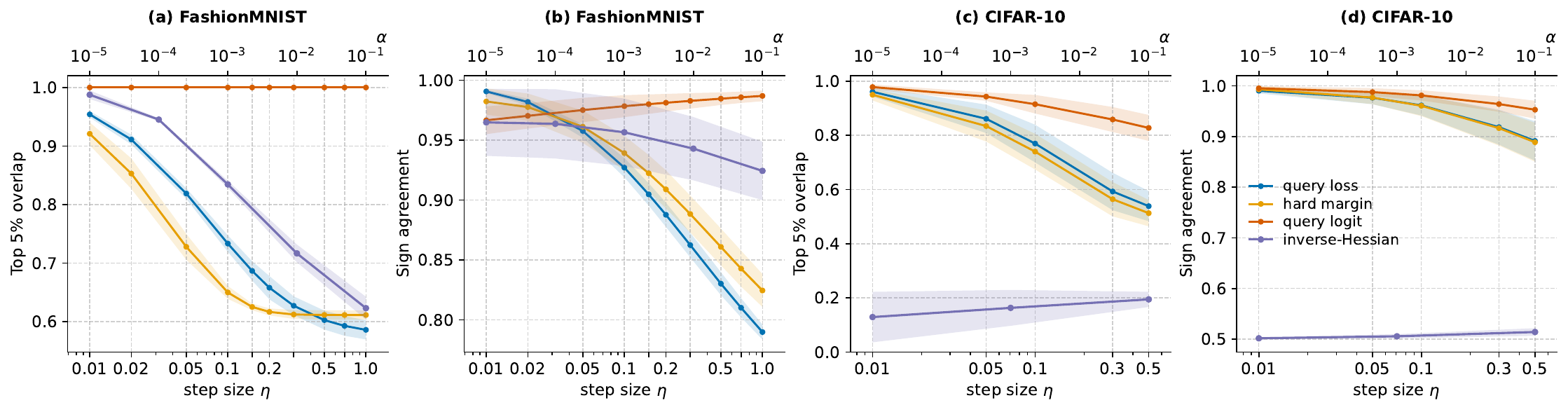}
\vskip -0.01in
\caption{Approximation error for first-order updates and damped inverse-Hessian approximations.}
\label{fig:a-2}
\end{figure}

\begin{table}[t]
\centering
\small
\caption{CIFAR-10 specification mismatch. Results are averaged over three seeds.}
\vskip 0.1in
\label{tab:a-7}
\setlength{\tabcolsep}{5pt}
\begin{tabular}{@{}clccc@{}}
\toprule
Axis & Comparison & Kendall's $\tau$ & Top-5\% overlap & Sign agreement \\
\midrule
$B$ & query loss vs.\ soft margin & $1.000 \pm 0.000$ & $1.000 \pm 0.000$ & $0.997 \pm 0.000$ \\
$B$ & query loss vs.\ hard margin & $0.944 \pm 0.005$ & $0.954 \pm 0.003$ & $0.966 \pm 0.003$ \\
$B$ & query loss vs.\ query logit & $0.501 \pm 0.005$ & $0.693 \pm 0.015$ & $0.746 \pm 0.002$ \\
\midrule
$\mathcal P$ & $\alpha=10^{-3}$ upweighting vs.\ local LOO & $-0.052 \pm 0.029$ & $0.096 \pm 0.026$ & $0.051 \pm 0.021$ \\
$\mathcal P$ & $\alpha=10^{-1}$ upweighting vs.\ local LOO & $0.060 \pm 0.063$ & $0.099 \pm 0.020$ & $0.111 \pm 0.021$ \\
\midrule
$\mathcal T$ & exact one-step vs.\ exact five-step & $0.970 \pm 0.004$ & $0.988 \pm 0.006$ & $0.995 \pm 0.000$ \\
$\mathcal T$ & exact one-step vs.\ inverse-Hessian & $0.058 \pm 0.009$ & $0.283 \pm 0.018$ & $0.529 \pm 0.005$ \\
\bottomrule
\end{tabular}
\end{table}

\begin{table}[t]
\centering
\small
\caption{Full AUPRC results for noisy label detection across noise rates. Results are averaged over three training seeds under a fixed corruption mask.}
\vskip 0.1in
\label{tab:a-8}
\resizebox{\columnwidth}{!}{
\begin{tabular}{@{}ccccccc@{}}
\toprule
Method & Behavior & $\rho=0.05$ & $\rho=0.1$ & $\rho=0.2$ & $\rho=0.3$ & $\rho=0.4$ \\
\midrule
Random & / & 0.050 $\pm$ 0.000 & 0.101 $\pm$ 0.001 & 0.198 $\pm$ 0.001 & 0.295 $\pm$ 0.001 & 0.395 $\pm$ 0.001 \\
RepSim & / & 0.062 $\pm$ 0.001 & 0.125 $\pm$ 0.001 & 0.216 $\pm$ 0.008 & 0.295 $\pm$ 0.002 & 0.367 $\pm$ 0.004 \\
NTK & / & 0.044 $\pm$ 0.001 & 0.096 $\pm$ 0.002 & 0.197 $\pm$ 0.004 & 0.285 $\pm$ 0.010 & 0.367 $\pm$ 0.005 \\
\midrule
\multirow{3}{*}{GradSim} & Trusted loss & \cellcolor{preferred} 0.177 $\pm$ 0.019 & \cellcolor{preferred} 0.238 $\pm$ 0.013 & 0.401 $\pm$ 0.017 & \cellcolor{preferred} 0.524 $\pm$ 0.013 & \cellcolor{preferred} 0.643 $\pm$ 0.041 \\
 & Target logit & 0.055 $\pm$ 0.007 & 0.153 $\pm$ 0.030 & 0.345 $\pm$ 0.022 & 0.463 $\pm$ 0.023 & 0.569 $\pm$ 0.020 \\
& Hard margin & 0.104 $\pm$ 0.002 & 0.223 $\pm$ 0.026 & \cellcolor{preferred} 0.402 $\pm$ 0.068 & 0.465 $\pm$ 0.008 & 0.607 $\pm$ 0.029 \\
\midrule
\multirow{3}{*}{TracIn} & Trusted loss & 0.110 $\pm$ 0.014 & 0.210 $\pm$ 0.020 & 0.349 $\pm$ 0.027 & 0.478 $\pm$ 0.013 & 0.586 $\pm$ 0.073 \\
& Target logit & \cellcolor{preferred} \textbf{0.329 $\pm$ 0.032} & \cellcolor{preferred} \textbf{0.452 $\pm$ 0.042} & \cellcolor{preferred} \textbf{0.602 $\pm$ 0.031} & \cellcolor{preferred} \textbf{0.722 $\pm$ 0.033} & \cellcolor{preferred} \textbf{0.800 $\pm$ 0.003} \\ 
 & Hard margin & 0.259 $\pm$ 0.031 & 0.371 $\pm$ 0.079 & 0.468 $\pm$ 0.038 & 0.534 $\pm$ 0.014 & 0.533 $\pm$ 0.015 \\
\midrule
\multirow{3}{*}{LiSSA} & Trusted loss & \cellcolor{preferred} 0.289 $\pm$ 0.006 & \cellcolor{preferred} 0.377 $\pm$ 0.018 & 0.535 $\pm$ 0.025 & 0.636 $\pm$ 0.014 & 0.713 $\pm$ 0.021 \\
& Target logit & 0.080 $\pm$ 0.004 & 0.214 $\pm$ 0.028 & 0.421 $\pm$ 0.013 & 0.541 $\pm$ 0.019 & 0.632 $\pm$ 0.021 \\
& Hard margin & 0.128 $\pm$ 0.011 & 0.352 $\pm$ 0.046 & \cellcolor{preferred} 0.587 $\pm$ 0.020 & \cellcolor{preferred} 0.668 $\pm$ 0.010 & \cellcolor{preferred} 0.738 $\pm$ 0.017 \\
\bottomrule
\end{tabular}}
\end{table}

\begin{table}[t]
\centering
\small
\caption{Architecture comparison for noisy label detection at $\rho=0.2$. Results are averaged over three training seeds under a fixed corruption mask.}
\vskip 0.1in
\label{tab:a-9}
\begin{tabular}{@{}cccccc@{}}
\toprule
Method  & Behavior & ResNet-18 & ResNet-50 & VGG-19-BN & MobileNetV2 \\
\midrule
Random & / & 0.198 $\pm$ 0.001 & 0.198 $\pm$ 0.001 & 0.198 $\pm$ 0.001 & 0.198 $\pm$ 0.001 \\
RepSim & / & 0.216 $\pm$ 0.008 & 0.198 $\pm$ 0.018 & 0.369 $\pm$ 0.100 & 0.216 $\pm$ 0.005 \\
NTK & / & 0.197 $\pm$ 0.004 & 0.196 $\pm$ 0.015 & 0.260 $\pm$ 0.036 & 0.218 $\pm$ 0.005 \\
\midrule
\multirow{3}{*}{GradSim} & Trusted loss & 0.401 $\pm$ 0.017 & 0.306 $\pm$ 0.038 & \cellcolor{preferred} 0.308 $\pm$ 0.020 & \cellcolor{preferred} 0.236 $\pm$ 0.003 \\
 & Target logit & 0.345 $\pm$ 0.022 & 0.295 $\pm$ 0.107 & 0.216 $\pm$ 0.020 & 0.195 $\pm$ 0.002 \\
& Hard margin & \cellcolor{preferred} 0.402 $\pm$ 0.068 & \cellcolor{preferred} 0.324 $\pm$ 0.105 & 0.243 $\pm$ 0.049 & 0.213 $\pm$ 0.006 \\
\midrule
\multirow{3}{*}{TracIn} & Trusted loss & 0.349 $\pm$ 0.027 & 0.501 $\pm$ 0.138 & 0.441 $\pm$ 0.026 & 0.350 $\pm$ 0.035 \\
& Target logit & \cellcolor{preferred} \textbf{0.602 $\pm$ 0.031} & \cellcolor{preferred} \textbf{0.746 $\pm$ 0.045} & \cellcolor{preferred} \textbf{0.801 $\pm$ 0.012} & \cellcolor{preferred} \textbf{0.647 $\pm$ 0.012} \\ 
 & Hard margin & 0.468 $\pm$ 0.038 & 0.621 $\pm$ 0.137 & 0.678 $\pm$ 0.056 & 0.494 $\pm$ 0.019 \\
\midrule
\multirow{3}{*}{LiSSA} & Trusted loss & 0.535 $\pm$ 0.025 & 0.409 $\pm$ 0.090 & \cellcolor{preferred} 0.307 $\pm$ 0.015 & \cellcolor{preferred} 0.237 $\pm$ 0.003 \\
 & Target logit & 0.421 $\pm$ 0.013 & 0.357 $\pm$ 0.111 & 0.218 $\pm$ 0.020 & 0.196 $\pm$ 0.001 \\
& Hard margin & \cellcolor{preferred} 0.587 $\pm$ 0.020 & \cellcolor{preferred} 0.421 $\pm$ 0.134 & 0.246 $\pm$ 0.050 & 0.214 $\pm$ 0.007 \\
\bottomrule
\end{tabular}
\end{table}

\begin{table}[t]
\centering
\small
\caption{Removal utility at $\rho=0.2$ where positive values indicate accuracy improvement. Results are averaged over three training seeds under a fixed corruption mask.}
\vskip 0.1in
\label{tab:a-10}
\begin{tabular}{@{}ccccc@{}}
\toprule
Method & Behavior & Top-1\% & Top-5\% & Top-10\% \\
\midrule
Random & / & $-2.583 \pm 0.163$ & $-0.830 \pm 0.986$ & $-1.673 \pm 0.443$ \\
RepSim & / & $-1.970 \pm 0.480$ & $-1.230 \pm 0.609$ & $-2.703 \pm 0.441$ \\
NTK & / & $-1.883 \pm 0.368$ & $-1.047 \pm 0.759$ & $-2.007 \pm 0.748$ \\
\midrule
\multirow{3}{*}{GradSim} & Trusted loss & $-2.083 \pm 1.632$ & $0.220 \pm 0.754$ & \cellcolor{preferred} $0.623 \pm 1.194$ \\
 & Target logit & \cellcolor{preferred} $\mathbf{-0.790 \pm 1.492}$ & $-0.210 \pm 0.695$ & $-0.447 \pm 0.867$ \\
& Hard margin & $-1.670 \pm 0.187$ & \cellcolor{preferred} $0.597 \pm 0.886$ & $0.243 \pm 2.025$ \\
\midrule
\multirow{3}{*}{TracIn} & Trusted loss & $-1.967 \pm 0.935$ & $1.297 \pm 0.655$ & $1.063 \pm 0.915$ \\
& Target logit & \cellcolor{preferred} $-1.573 \pm 0.294$ & \cellcolor{preferred} \textbf{1.450 $\pm$ 0.684} & \cellcolor{preferred} \textbf{3.200 $\pm$ 0.217} \\ 
 & Hard margin & $-1.823 \pm 0.538$ & $1.200 \pm 0.409$ & $2.233 \pm 0.946$ \\
\midrule
\multirow{3}{*}{LiSSA} & Trusted loss & \cellcolor{preferred} $-1.267 \pm 0.899$ & \cellcolor{preferred} $1.310 \pm 0.513$ & $1.397 \pm 0.790$ \\
& Target logit & $-1.643 \pm 0.232$ & $0.290 \pm 1.115$ & $0.113 \pm 0.670$ \\
& Hard margin & $-1.463 \pm 1.138$ & $1.207 \pm 1.199$ & \cellcolor{preferred} $2.213 \pm 0.955$ \\
\bottomrule
\end{tabular}
\end{table}

\begin{table}[t]
\centering
\small
\caption{Source model quality after task-specific training. Response-corruption loss is a convergence diagnostic. ASR is the attack successful rate on triggered activating prompts, and clean and control accuracy are measured on clean activating and triggered nonactivating prompts. Results are averaged over three seeds.}
\vskip 0.1in
\label{tab:a-11}
\begin{tabular}{@{}ccccc@{}}
\toprule
\multirow{2}{*}{\textbf{Model}}  & \textbf{Response corruption} & \multicolumn{3}{c}{\textbf{Conditional backdoor}} \\
\cmidrule(lr){2-2}\cmidrule(lr){3-5}
& \textbf{Loss $\downarrow$} & \textbf{ASR} & \textbf{Clean acc. $\uparrow$} & \textbf{Control acc. $\uparrow$} \\
\midrule
Qwen3-8B & 0.385 $\pm$ 0.002 & 0.809 $\pm$ 0.009 & 0.944 $\pm$ 0.002 & 0.946 $\pm$ 0.006 \\
Gemma-2-9B-it & 0.301 $\pm$ 0.001 & 0.827 $\pm$ 0.024 & 0.956 $\pm$ 0.011 & 0.963 $\pm$ 0.007 \\
Llama-3.1-8B-Instruct & 0.314 $\pm$ 0.001 & 0.490 $\pm$ 0.030 & 0.955 $\pm$ 0.009 & 0.956 $\pm$ 0.005 \\
\bottomrule
\end{tabular}
\end{table}

\begin{table}[t]
\centering
\small
\caption{Conditional backdoor attribution with Gemma-2-9B-it on ScienceQA. Results are averaged over three training seeds.}
\label{tab:a-12}
\vskip 0.1in
\resizebox{\columnwidth}{!}{
\begin{tabular}{ccccccc}
\toprule
\textbf{Method} & \textbf{Behavior}
& \textbf{AUPRC} & \textbf{AUROC} & \textbf{BT-FPR $\downarrow$}
& \textbf{P@5\%} & \textbf{P@10\%} \\
\midrule
Grad. norm & / & 0.118 $\pm$ 0.004 & 0.591 $\pm$ 0.009 & 0.101 $\pm$ 0.003 & 0.085 $\pm$ 0.008 & 0.089 $\pm$ 0.011 \\
RepSim & / & 0.258 $\pm$ 0.235 & 0.581 $\pm$ 0.183 & 0.136 $\pm$ 0.053 & 0.351 $\pm$ 0.353 & 0.241 $\pm$ 0.210 \\
Random & / & 0.105 $\pm$ 0.006 & 0.508 $\pm$ 0.014 & 0.102 $\pm$ 0.003 & 0.108 $\pm$ 0.028 & 0.105 $\pm$ 0.011 \\
\midrule
\multirow{4}{*}{GradSim}
& Response loss & 0.095 $\pm$ 0.004 & 0.447 $\pm$ 0.012 & 0.098 $\pm$ 0.012 & 0.147 $\pm$ 0.023 & 0.141 $\pm$ 0.010 \\
& Explanation loss & 0.089 $\pm$ 0.004 & 0.439 $\pm$ 0.005 & 0.089 $\pm$ 0.013 & 0.111 $\pm$ 0.028 & 0.119 $\pm$ 0.010 \\
& Answer logit & 0.260 $\pm$ 0.039 & 0.549 $\pm$ 0.107 & \cellcolor{preferred} 0.081 $\pm$ 0.027 & 0.324 $\pm$ 0.017 & 0.257 $\pm$ 0.033 \\
& Answer margin & \cellcolor{preferred} 0.296 $\pm$ 0.040 & \cellcolor{preferred} 0.728 $\pm$ 0.037 & 0.083 $\pm$ 0.004 & \cellcolor{preferred} 0.335 $\pm$ 0.038 & \cellcolor{preferred} 0.294 $\pm$ 0.021 \\
\midrule
\multirow{4}{*}{TracIn}
 & Response loss & 0.246 $\pm$ 0.110 & 0.786 $\pm$ 0.076 & 0.075 $\pm$ 0.024 & 0.220 $\pm$ 0.150 & 0.241 $\pm$ 0.138 \\
& Explanation loss & 0.104 $\pm$ 0.005 & 0.522 $\pm$ 0.020 & 0.109 $\pm$ 0.002 & 0.095 $\pm$ 0.015 & 0.097 $\pm$ 0.011 \\
& Answer logit & \cellcolor{preferred} \textbf{0.855 $\pm$ 0.059} & \cellcolor{preferred} \textbf{0.946 $\pm$ 0.024} & 0.014 $\pm$ 0.019 & \cellcolor{preferred} \textbf{0.947 $\pm$ 0.029} & \cellcolor{preferred} \textbf{0.795 $\pm$ 0.061} \\ 
 & Answer margin & 0.826 $\pm$ 0.071 & 0.933 $\pm$ 0.033 & \cellcolor{preferred} \textbf{0.013 $\pm$ 0.015} & 0.919 $\pm$ 0.054 & 0.766 $\pm$ 0.065 \\
\midrule
\multirow{4}{*}{TRAK}
& Response loss & 0.097 $\pm$ 0.005 & 0.449 $\pm$ 0.021 & 0.096 $\pm$ 0.018 & 0.140 $\pm$ 0.021 & 0.133 $\pm$ 0.012 \\
& Explanation loss & 0.089 $\pm$ 0.002 & 0.433 $\pm$ 0.006 & 0.095 $\pm$ 0.015 & 0.103 $\pm$ 0.034 & 0.109 $\pm$ 0.013 \\
& Answer logit & 0.196 $\pm$ 0.040 & 0.523 $\pm$ 0.085 & 0.102 $\pm$ 0.005 & 0.249 $\pm$ 0.028 & 0.171 $\pm$ 0.005 \\
& Answer margin & \cellcolor{preferred} 0.256 $\pm$ 0.029 & \cellcolor{preferred} 0.700 $\pm$ 0.047 & \cellcolor{preferred} 0.085 $\pm$ 0.010 & \cellcolor{preferred} 0.301 $\pm$ 0.040 & \cellcolor{preferred} 0.257 $\pm$ 0.026 \\
\midrule
\multirow{4}{*}{LESS}
 & Response loss & 0.098 $\pm$ 0.005 & 0.452 $\pm$ 0.024 & 0.089 $\pm$ 0.001 & 0.137 $\pm$ 0.020 & 0.131 $\pm$ 0.013 \\
& Explanation loss & 0.090 $\pm$ 0.000 & 0.438 $\pm$ 0.009 & 0.091 $\pm$ 0.008 & 0.100 $\pm$ 0.008 & 0.118 $\pm$ 0.014 \\
& Answer logit & 0.181 $\pm$ 0.027 & 0.522 $\pm$ 0.055 & 0.097 $\pm$ 0.014 & 0.243 $\pm$ 0.022 & 0.177 $\pm$ 0.014 \\
& Answer margin & \cellcolor{preferred} 0.223 $\pm$ 0.034 & \cellcolor{preferred} 0.665 $\pm$ 0.039 & \cellcolor{preferred} 0.083 $\pm$ 0.014 & \cellcolor{preferred} 0.268 $\pm$ 0.035 & \cellcolor{preferred} 0.237 $\pm$ 0.025 \\
\bottomrule
\end{tabular}}
\end{table}

\begin{table}[t]
\centering
\small
\caption{Conditional backdoor attribution with Llama-3.1-8B-Instruct on ScienceQA. Results are averaged over three training seeds.}
\vskip 0.1in
\label{tab:a-13}
\resizebox{\columnwidth}{!}{
\begin{tabular}{ccccccc}
\toprule
\textbf{Method}  & \textbf{Behavior}
& \textbf{AUPRC} & \textbf{AUROC} & \textbf{BT-FPR $\downarrow$}
& \textbf{P@5\%} & \textbf{P@10\%} \\
\midrule
Grad. norm & / & 0.130 $\pm$ 0.004 & 0.657 $\pm$ 0.011 & 0.102 $\pm$ 0.006 & 0.096 $\pm$ 0.004 & 0.095 $\pm$ 0.002 \\
RepSim & / & \textbf{0.626 $\pm$ 0.243} & \textbf{0.854 $\pm$ 0.094} & 0.153 $\pm$ 0.106 & \textbf{0.796 $\pm$ 0.262} & \textbf{0.581 $\pm$ 0.207} \\ 
Random & / & 0.105 $\pm$ 0.006 & 0.508 $\pm$ 0.014 & 0.102 $\pm$ 0.003 & 0.108 $\pm$ 0.028 & 0.105 $\pm$ 0.011 \\
\midrule
\multirow{4}{*}{GradSim}
& Response loss & 0.121 $\pm$ 0.014 & 0.488 $\pm$ 0.016 & 0.077 $\pm$ 0.001 & 0.161 $\pm$ 0.068 & 0.207 $\pm$ 0.046 \\
& Explanation loss & 0.105 $\pm$ 0.004 & 0.473 $\pm$ 0.006 & 0.085 $\pm$ 0.006 & 0.120 $\pm$ 0.004 & 0.158 $\pm$ 0.029 \\
& Answer logit & \cellcolor{preferred} 0.448 $\pm$ 0.075 & 0.518 $\pm$ 0.050 & 0.061 $\pm$ 0.010 & \cellcolor{preferred} 0.755 $\pm$ 0.134 & \cellcolor{preferred} 0.465 $\pm$ 0.026 \\
 & Answer margin & 0.387 $\pm$ 0.140 & \cellcolor{preferred} 0.545 $\pm$ 0.020 & \cellcolor{preferred} \textbf{0.053 $\pm$ 0.013} & 0.653 $\pm$ 0.201 & 0.446 $\pm$ 0.070 \\
\midrule
\multirow{4}{*}{TracIn}
& Response loss & 0.115 $\pm$ 0.006 & 0.543 $\pm$ 0.021 & 0.115 $\pm$ 0.012 & 0.096 $\pm$ 0.021 & 0.102 $\pm$ 0.008 \\
& Explanation loss & 0.101 $\pm$ 0.003 & 0.499 $\pm$ 0.005 & 0.118 $\pm$ 0.016 & 0.096 $\pm$ 0.028 & 0.096 $\pm$ 0.009 \\
& Answer logit & \cellcolor{preferred} 0.464 $\pm$ 0.029 & \cellcolor{preferred} 0.583 $\pm$ 0.042 & 0.189 $\pm$ 0.125 & \cellcolor{preferred} 0.724 $\pm$ 0.032 & \cellcolor{preferred} 0.473 $\pm$ 0.019 \\
 & Answer margin & 0.363 $\pm$ 0.147 & 0.581 $\pm$ 0.130 & \cellcolor{preferred} 0.102 $\pm$ 0.121 & 0.564 $\pm$ 0.271 & 0.379 $\pm$ 0.126 \\
\midrule
\multirow{4}{*}{TRAK}
& Response loss & 0.125 $\pm$ 0.012 & 0.487 $\pm$ 0.024 & 0.071 $\pm$ 0.002 & 0.153 $\pm$ 0.055 & 0.223 $\pm$ 0.038 \\
& Explanation loss & 0.103 $\pm$ 0.002 & 0.467 $\pm$ 0.006 & 0.086 $\pm$ 0.014 & 0.112 $\pm$ 0.011 & 0.125 $\pm$ 0.009 \\
& Answer logit & \cellcolor{preferred} 0.450 $\pm$ 0.057 & 0.540 $\pm$ 0.098 & \cellcolor{preferred} 0.053 $\pm$ 0.005 & \cellcolor{preferred} 0.735 $\pm$ 0.137 & \cellcolor{preferred} 0.459 $\pm$ 0.020 \\
 & Answer margin & 0.399 $\pm$ 0.128 & \cellcolor{preferred} 0.541 $\pm$ 0.027 & 0.055 $\pm$ 0.009 & 0.637 $\pm$ 0.207 & 0.437 $\pm$ 0.087 \\
\midrule
\multirow{4}{*}{LESS}
& Response loss & 0.130 $\pm$ 0.016 & 0.505 $\pm$ 0.037 & 0.082 $\pm$ 0.007 & 0.165 $\pm$ 0.066 & 0.213 $\pm$ 0.074 \\
& Explanation loss & 0.108 $\pm$ 0.004 & 0.479 $\pm$ 0.011 & 0.091 $\pm$ 0.012 & 0.112 $\pm$ 0.011 & 0.136 $\pm$ 0.012 \\
& Answer logit & \cellcolor{preferred} 0.414 $\pm$ 0.082 & 0.540 $\pm$ 0.079 & \cellcolor{preferred} 0.061 $\pm$ 0.018 & \cellcolor{preferred} 0.689 $\pm$ 0.154 & 0.449 $\pm$ 0.034 \\
& Answer margin & 0.393 $\pm$ 0.134 & \cellcolor{preferred} 0.547 $\pm$ 0.027 & 0.070 $\pm$ 0.020 & 0.635 $\pm$ 0.198 & \cellcolor{preferred} 0.450 $\pm$ 0.071 \\
\bottomrule
\end{tabular}}
\end{table}

\begin{figure}[t]
\centering
\includegraphics[width=\columnwidth]{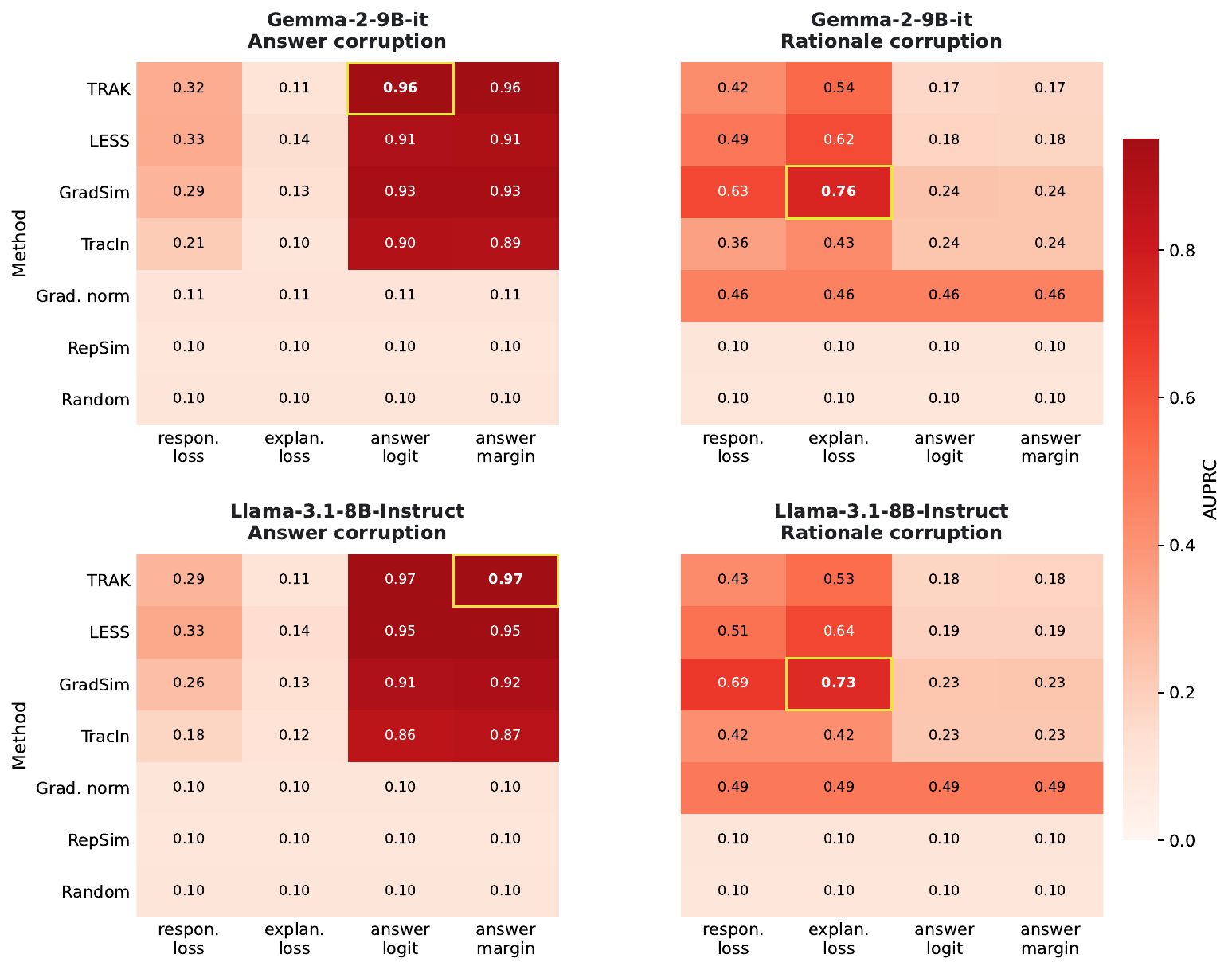}
\vskip -0.1in
\caption{Response-corruption attribution with Gemma-2-9B-it and Llama-3.1-8B-Instruct. Cells report mean AUPRC over three training seeds. Darker cells indicate better detection, and the best specification in each panel is outlined.}
\label{fig:a-3}
\end{figure}

\begin{table}[t]
\centering
\small
\caption{Removal and retraining for conditional backdoor attribution with Qwen3-8B. Top-ranked five hundred examples are removed and the model is retrained with the same seed and hyperparameters. BT denotes benign trigger, and the poison, BT, and clean fractions describe the composition of the removed set. ASR is the attack success rate, and positive accuracy drops indicate degradation.}
\vskip 0.1in
\label{tab:a-14}
\resizebox{\columnwidth}{!}{
\begin{tabular}{@{}cccccccc@{}}
\toprule
\multirow{2}{*}{\textbf{Method}} & \multirow{2}{*}{\textbf{Behavior}}
& \multicolumn{3}{c}{\textbf{Removed composition}} & \multicolumn{3}{c}{\textbf{After retraining}} \\
\cmidrule(lr){3-5}\cmidrule(lr){6-8}
& & Poison recall $\uparrow$ & BT frac. $\downarrow$ & Clean frac. $\downarrow$ & ASR $\downarrow$ & Attack reduction $\uparrow$ & Accuracy drop $\downarrow$ \\
\midrule
Grad. norm & / & 0.085 $\pm$ 0.002 & 0.102 $\pm$ 0.005 & 0.813 $\pm$ 0.003 & 0.808 $\pm$ 0.012 & 0.001 $\pm$ 0.019 & $-0.004$ $\pm$ 0.012 \\
RepSim & / & 0.487 $\pm$ 0.204 & 0.212 $\pm$ 0.075 & 0.301 $\pm$ 0.130 & 0.568 $\pm$ 0.411 & 0.241 $\pm$ 0.419 & 0.002 $\pm$ 0.004 \\
Random & / & 0.105 $\pm$ 0.011 & 0.102 $\pm$ 0.003 & 0.793 $\pm$ 0.011 & 0.809 $\pm$ 0.005 & 0.000 $\pm$ 0.014 & $-0.003$ $\pm$ 0.004 \\
\midrule
\multirow{4}{*}{GradSim}
& Response loss & 0.116 $\pm$ 0.016 & 0.085 $\pm$ 0.009 & 0.799 $\pm$ 0.016 & 0.813 $\pm$ 0.008 & $-0.005$ $\pm$ 0.016 & 0.003 $\pm$ 0.006 \\
& Explanation loss & 0.106 $\pm$ 0.002 & 0.085 $\pm$ 0.007 & 0.809 $\pm$ 0.008 & 0.812 $\pm$ 0.009 & $-0.003$ $\pm$ 0.008 & 0.003 $\pm$ 0.005 \\
& Answer logit & \cellcolor{preferred} 0.223 $\pm$ 0.019& \cellcolor{preferred} 0.063 $\pm$ 0.005& \cellcolor{preferred} 0.714 $\pm$ 0.023& \cellcolor{preferred} 0.795 $\pm$ 0.006& \cellcolor{preferred} 0.013 $\pm$ 0.006& 0.008 $\pm$ 0.005 \\
& Answer margin & 0.197 $\pm$ 0.014 & 0.080 $\pm$ 0.018 & 0.723 $\pm$ 0.006 & 0.809 $\pm$ 0.005 & $-0.001$ $\pm$ 0.010 & \cellcolor{preferred} $-0.002$ $\pm$ 0.007 \\
\midrule
\multirow{4}{*}{TracIn}
& Response loss & 0.503 $\pm$ 0.199 & 0.029 $\pm$ 0.032 & 0.469 $\pm$ 0.172 & 0.563 $\pm$ 0.354 & 0.245 $\pm$ 0.352 & \cellcolor{preferred} $-0.002$ $\pm$ 0.007 \\
& Explanation loss & 0.113 $\pm$ 0.018 & 0.118 $\pm$ 0.005 & 0.769 $\pm$ 0.023 & 0.795 $\pm$ 0.010 & 0.014 $\pm$ 0.019 & 0.007 $\pm$ 0.001 \\
& Answer logit & \cellcolor{preferred} 0.843 $\pm$ 0.032& \cellcolor{preferred} 0.001 $\pm$ 0.001& \cellcolor{preferred} 0.155 $\pm$ 0.033& 0.053 $\pm$ 0.009 & 0.755 $\pm$ 0.009 & 0.001 $\pm$ 0.008 \\
& Answer margin & 0.747 $\pm$ 0.050 & 0.002 $\pm$ 0.002 & 0.251 $\pm$ 0.052 & \cellcolor{preferred} 0.051 $\pm$ 0.005& \cellcolor{preferred} 0.757 $\pm$ 0.014& 0.005 $\pm$ 0.001 \\
\midrule
\multirow{4}{*}{TRAK}
& Response loss & 0.109 $\pm$ 0.011 & 0.091 $\pm$ 0.007 & 0.799 $\pm$ 0.010 & 0.805 $\pm$ 0.009 & 0.004 $\pm$ 0.000 & \cellcolor{preferred} 0.001 $\pm$ 0.006 \\
& Explanation loss & 0.095 $\pm$ 0.009 & 0.094 $\pm$ 0.007 & 0.811 $\pm$ 0.004 & 0.805 $\pm$ 0.005 & 0.004 $\pm$ 0.009 & 0.005 $\pm$ 0.004 \\
& Answer logit & 0.195 $\pm$ 0.011 & \cellcolor{preferred} 0.071 $\pm$ 0.004& 0.734 $\pm$ 0.014 & 0.811 $\pm$ 0.010 & $-0.003$ $\pm$ 0.011 & 0.001 $\pm$ 0.012 \\
& Answer margin & \cellcolor{preferred} 0.200 $\pm$ 0.003& 0.072 $\pm$ 0.003 & \cellcolor{preferred} 0.728 $\pm$ 0.003& \cellcolor{preferred} 0.797 $\pm$ 0.013& \cellcolor{preferred} 0.012 $\pm$ 0.016& 0.001 $\pm$ 0.006 \\
\midrule
\multirow{4}{*}{LESS}
& Response loss & 0.113 $\pm$ 0.008 & 0.087 $\pm$ 0.003 & 0.799 $\pm$ 0.009 & \cellcolor{preferred} 0.620 $\pm$ 0.294& \cellcolor{preferred} 0.189 $\pm$ 0.294& 0.005 $\pm$ 0.009 \\
& Explanation loss & 0.101 $\pm$ 0.005 & 0.090 $\pm$ 0.009 & 0.809 $\pm$ 0.014 & 0.797 $\pm$ 0.001 & 0.011 $\pm$ 0.009 & 0.003 $\pm$ 0.008 \\
& Answer logit & 0.211 $\pm$ 0.014 & 0.088 $\pm$ 0.009 & 0.701 $\pm$ 0.017 & 0.813 $\pm$ 0.006 & $-0.004$ $\pm$ 0.015 & \cellcolor{preferred} 0.002 $\pm$ 0.009 \\
& Answer margin & \cellcolor{preferred} 0.237 $\pm$ 0.023& \cellcolor{preferred} 0.082 $\pm$ 0.009& \cellcolor{preferred} 0.681 $\pm$ 0.032& 0.803 $\pm$ 0.010 & 0.005 $\pm$ 0.011 & 0.005 $\pm$ 0.005 \\
\midrule
\multirow{3}{*}{Oracle}
& Harmful poison & \textbf{1.000 $\pm$ 0.000} & \textbf{0.000 $\pm$ 0.000} & \textbf{0.000 $\pm$ 0.000} & \textbf{0.050 $\pm$ 0.008} & \textbf{0.759 $\pm$ 0.008} & 0.003 $\pm$ 0.009 \\
& Benign trigger & 0.000 $\pm$ 0.000 & 1.000 $\pm$ 0.000 & 0.000 $\pm$ 0.000 & 0.811 $\pm$ 0.006 & $-0.003$ $\pm$ 0.003 & \textbf{-0.009 $\pm$ 0.003} \\ 
& Clean & 0.000 $\pm$ 0.000 & 0.000 $\pm$ 0.000 & 1.000 $\pm$ 0.000 & 0.805 $\pm$ 0.004 & 0.003 $\pm$ 0.009 & $-0.001$ $\pm$ 0.004 \\
\bottomrule
\end{tabular}}
\end{table}

\begin{table}[t]
\centering
\small
\caption{Reference ablation for conditional backdoor poison detection with Qwen3-8B. The four specifications cross whether the query contains the trigger with whether the target is the harmful poison answer. Only the triggered harmful specification matches the induced behavior.}
\vskip 0.1in
\label{tab:a-15}
\begin{tabular}{@{}cccccc@{}}
\toprule
\multirow{2}{*}{\textbf{Method}} & \multirow{2}{*}{\textbf{Behavior}}
& \multicolumn{2}{c}{\textbf{Triggered query}} & \multicolumn{2}{c}{\textbf{Clean query}} \\
\cmidrule(lr){3-4}\cmidrule(lr){5-6}
& & Harmful target & Clean target & Harmful target & Clean target \\
\midrule
Grad. norm & / & $0.115 \pm 0.002$ & $0.115 \pm 0.002$ & $0.115 \pm 0.002$ & $0.115 \pm 0.002$ \\
Random & / & $0.105 \pm 0.006$ & $0.105 \pm 0.006$ & $0.105 \pm 0.006$ & $0.105 \pm 0.006$ \\
RepSim & / & $0.492 \pm 0.283$ & \textbf{0.492 $\pm$ 0.283} & $0.075 \pm 0.002$ & $0.075 \pm 0.002$ \\
\midrule
\multirow{4}{*}{GradSim}
& Response loss & $0.092 \pm 0.007$ & $0.136 \pm 0.019$ & $0.119 \pm 0.017$ & $0.091 \pm 0.005$ \\
& Explanation loss & $0.090 \pm 0.006$ & $0.090 \pm 0.006$ & $0.092 \pm 0.005$ & $0.092 \pm 0.005$ \\
& Answer logit & \cellcolor{preferred} $0.221 \pm 0.027$& $0.148 \pm 0.018$ & $0.122 \pm 0.019$ & $0.115 \pm 0.007$ \\
& Answer margin & $0.180 \pm 0.016$ & $0.146 \pm 0.018$ & $0.122 \pm 0.019$ & $0.125 \pm 0.016$ \\
\midrule
\multirow{4}{*}{TracIn}
& Response loss & $0.481 \pm 0.223$ & $0.062 \pm 0.004$ & $0.301 \pm 0.003$ & $0.101 \pm 0.006$ \\
& Explanation loss & $0.104 \pm 0.014$ & $0.104 \pm 0.014$ & $0.103 \pm 0.008$ & $0.103 \pm 0.008$ \\
& Answer logit & \cellcolor{preferred} \textbf{0.872 $\pm$ 0.028} & $0.061 \pm 0.001$ & $0.308 \pm 0.004$ & \textbf{0.165 $\pm$ 0.183} \\ 
& Answer margin & $0.782 \pm 0.049$ & $0.056 \pm 0.003$ & \textbf{0.309 $\pm$ 0.004} & $0.061 \pm 0.002$ \\
\midrule
\multirow{4}{*}{TRAK}
& Response loss & $0.089 \pm 0.003$ & $0.141 \pm 0.013$ & $0.149 \pm 0.019$ & $0.091 \pm 0.001$ \\
& Explanation loss & $0.087 \pm 0.004$ & $0.087 \pm 0.004$ & $0.090 \pm 0.001$ & $0.090 \pm 0.001$ \\
& Answer logit & \cellcolor{preferred} $0.193 \pm 0.020$& $0.143 \pm 0.013$ & $0.151 \pm 0.019$ & $0.114 \pm 0.013$ \\
& Answer margin & $0.184 \pm 0.014$ & $0.142 \pm 0.014$ & $0.151 \pm 0.019$ & $0.141 \pm 0.013$ \\
\midrule
\multirow{4}{*}{LESS}
& Response loss & $0.091 \pm 0.003$ & $0.155 \pm 0.015$ & $0.147 \pm 0.031$ & $0.093 \pm 0.002$ \\
& Explanation loss & $0.088 \pm 0.003$ & $0.088 \pm 0.003$ & $0.093 \pm 0.002$ & $0.093 \pm 0.002$ \\
& Answer logit & $0.183 \pm 0.024$ & $0.154 \pm 0.014$ & $0.148 \pm 0.033$ & $0.111 \pm 0.002$ \\
& Answer margin & \cellcolor{preferred} $0.207 \pm 0.022$& $0.154 \pm 0.015$ & $0.148 \pm 0.033$ & $0.130 \pm 0.017$ \\
\bottomrule
\end{tabular}
\end{table}

\begin{table}[t]
\centering
\small
\caption{Signal-augmented representation similarity for noisy label detection at $\rho=0.2$. Entries in parentheses denote the signal space. Blue shading marks the preferred behavior surrogate, and bold indicates the best result in each column. Results are averaged over three training seeds.}
\vskip 0.1in
\label{tab:a-16}
\resizebox{\columnwidth}{!}{
\begin{tabular}{@{}ccccccc@{}}
\toprule
\textbf{Method} & \textbf{Behavior} & \textbf{AUPRC} & \textbf{AUROC} & \textbf{P@1\%} & \textbf{P@5\%} & \textbf{P@10\%} \\
\midrule
RepSim & / & $0.218 \pm 0.011$ & $0.529 \pm 0.009$ & $0.289 \pm 0.074$ & $0.237 \pm 0.044$ & $0.222 \pm 0.028$ \\
\midrule
+ train (logit) & / & \textbf{0.337 $\pm$ 0.003} & \textbf{0.628 $\pm$ 0.003} & \textbf{0.615 $\pm$ 0.032} & \textbf{0.495 $\pm$ 0.020} & $0.429 \pm 0.002$ \\
\midrule
\multirow{3}{*}{+ behavior (logit)} & Trusted loss & \cellcolor{preferred} $0.221 \pm 0.010$ & $0.504 \pm 0.002$ & \cellcolor{preferred} $0.487 \pm 0.100$ & \cellcolor{preferred} $0.254 \pm 0.032$ & $0.204 \pm 0.019$ \\
 & Target logit & $0.218 \pm 0.011$ & \cellcolor{preferred} $0.529 \pm 0.009$ & $0.289 \pm 0.074$ & $0.237 \pm 0.044$ & \cellcolor{preferred} $0.222 \pm 0.028$ \\
 & Hard margin & $0.218 \pm 0.011$ & \cellcolor{preferred} $0.529 \pm 0.009$ & $0.289 \pm 0.074$ & $0.237 \pm 0.044$ & \cellcolor{preferred} $0.222 \pm 0.028$ \\
\midrule
\multirow{3}{*}{+ both (logit)} & Trusted loss & $0.306 \pm 0.002$ & $0.620 \pm 0.002$ & $0.508 \pm 0.013$ & $0.416 \pm 0.010$ & $0.377 \pm 0.007$ \\
 & Target logit & \cellcolor{preferred} \textbf{0.337 $\pm$ 0.003} & \cellcolor{preferred} \textbf{0.628 $\pm$ 0.003} & \cellcolor{preferred} \textbf{0.615 $\pm$ 0.032} & \cellcolor{preferred} \textbf{0.495 $\pm$ 0.020} & \cellcolor{preferred} $0.429 \pm 0.002$ \\
 & Hard margin & $0.335 \pm 0.003$ & $0.627 \pm 0.003$ & $0.609 \pm 0.041$ & $0.491 \pm 0.021$ & $0.426 \pm 0.002$ \\
\midrule
+ train (repr.) & / & $0.336 \pm 0.003$ & \textbf{0.628 $\pm$ 0.003} & \textbf{0.615 $\pm$ 0.036} & $0.494 \pm 0.020$ & \textbf{0.430 $\pm$ 0.002} \\
\midrule
\multirow{3}{*}{+ behavior (repr.)} & Trusted loss & \cellcolor{preferred} $0.224 \pm 0.014$ & $0.504 \pm 0.003$ & \cellcolor{preferred} $0.499 \pm 0.119$ & \cellcolor{preferred} $0.269 \pm 0.057$ & \cellcolor{preferred} $0.211 \pm 0.032$ \\
 & Target logit & $0.210 \pm 0.008$ & $0.521 \pm 0.006$ & $0.253 \pm 0.064$ & $0.217 \pm 0.042$ & $0.207 \pm 0.023$ \\
 & Hard margin & $0.209 \pm 0.008$ & \cellcolor{preferred} $0.522 \pm 0.005$ & $0.237 \pm 0.076$ & $0.206 \pm 0.044$ & $0.197 \pm 0.028$ \\
\midrule
\multirow{3}{*}{+ both (repr.)} & Trusted loss & $0.307 \pm 0.002$ & $0.621 \pm 0.001$ & $0.502 \pm 0.007$ & $0.421 \pm 0.010$ & $0.378 \pm 0.006$ \\
 & Target logit & \cellcolor{preferred} $0.336 \pm 0.003$ & \cellcolor{preferred} \textbf{0.628 $\pm$ 0.003} & \cellcolor{preferred} \textbf{0.615 $\pm$ 0.036} & \cellcolor{preferred} $0.494 \pm 0.020$ & \cellcolor{preferred} \textbf{0.430 $\pm$ 0.002} \\
 & Hard margin & $0.334 \pm 0.003$ & $0.627 \pm 0.003$ & $0.607 \pm 0.041$ & $0.490 \pm 0.021$ & $0.428 \pm 0.003$ \\
\bottomrule
\end{tabular}}
\end{table}

\begin{table}[t]
\centering
\small
\caption{Signal-augmented representation similarity for response corruption on Qwen3-8B. Entries in parentheses denote the signal normalization, and bold indicates the best result in each column. Results are averaged over three training seeds.}
\vskip 0.1in
\label{tab:a-17}
\textbf{Answer corruption, answer logit}\\[4pt]
\begin{tabular}{@{}ccccc@{}}
\toprule
\textbf{Method} & \textbf{L24} & \textbf{L28} & \textbf{L32} & \textbf{L35} \\
\midrule
RepSim & $0.100 \pm 0.004$ & $0.100 \pm 0.004$ & $0.100 \pm 0.004$ & $0.100 \pm 0.004$ \\
+ train (raw) & $0.052 \pm 0.000$ & $0.052 \pm 0.000$ & $0.052 \pm 0.000$ & $0.466 \pm 0.007$ \\
+ train (l2) & $0.052 \pm 0.000$ & $0.052 \pm 0.000$ & $0.052 \pm 0.000$ & $0.069 \pm 0.001$ \\
+ behavior (raw) & $0.052 \pm 0.000$ & $0.052 \pm 0.000$ & $0.052 \pm 0.000$ & $0.466 \pm 0.007$ \\
+ behavior (l2) & $0.052 \pm 0.000$ & $0.052 \pm 0.000$ & $0.052 \pm 0.000$ & $0.069 \pm 0.001$ \\
+ both (raw) & $0.976 \pm 0.007$ & $0.978 \pm 0.005$ & $0.978 \pm 0.004$ & \textbf{0.557 $\pm$ 0.005} \\
+ both (l2) & \textbf{0.979 $\pm$ 0.009} & \textbf{0.981 $\pm$ 0.009} & \textbf{0.984 $\pm$ 0.006} & $0.320 \pm 0.065$ \\
\bottomrule
\end{tabular}
\vskip 0.14in
\textbf{Rationale corruption, explanation loss}\\[4pt]
\begin{tabular}{@{}ccccc@{}}
\toprule
\textbf{Method} & \textbf{L24} & \textbf{L28} & \textbf{L32} & \textbf{L35} \\
\midrule
RepSim & $0.102 \pm 0.005$ & $0.102 \pm 0.005$ & $0.102 \pm 0.005$ & $0.102 \pm 0.005$ \\
+ train (raw) & $0.066 \pm 0.003$ & $0.073 \pm 0.004$ & $0.075 \pm 0.002$ & \textbf{0.195 $\pm$ 0.014} \\
+ train (l2) & $0.054 \pm 0.000$ & $0.056 \pm 0.000$ & $0.056 \pm 0.000$ & $0.079 \pm 0.003$ \\
+ behavior (raw) & $0.066 \pm 0.003$ & $0.073 \pm 0.004$ & $0.075 \pm 0.002$ & \textbf{0.195 $\pm$ 0.014} \\
+ behavior (l2) & $0.054 \pm 0.000$ & $0.056 \pm 0.000$ & $0.056 \pm 0.000$ & $0.079 \pm 0.003$ \\
+ both (raw) & $0.479 \pm 0.024$ & \textbf{0.417 $\pm$ 0.003} & \textbf{0.445 $\pm$ 0.043} & $0.165 \pm 0.015$ \\
+ both (l2) & \textbf{0.509 $\pm$ 0.023} & $0.392 \pm 0.004$ & $0.359 \pm 0.009$ & $0.116 \pm 0.007$ \\
\bottomrule
\end{tabular}
\end{table}

\section{Monotonicity of Behavior Surrogates}\label{app:proof}
\begin{proposition}\label{prop:monotone}
Let $\mathcal P$ and $\mathcal T$ be fixed. If two behaviors $B$ and $B'$ satisfy $B'(q; \theta) = \phi(B(q; \theta))$ for some strictly increasing function $\phi$, then for every query $q$, the exact influence rankings over candidates induced by $B$ and $B'$ are identical.
\end{proposition}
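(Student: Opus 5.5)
The plan is to compare the two estimands candidate by candidate and show that, for a fixed query $q$, the map $k\mapsto I_{\mathcal S'}(q,k)$ is a strictly increasing transformation of $k\mapsto I_{\mathcal S}(q,k)$. Here $\mathcal S=(B,\mathcal P,\mathcal T)$ and $\mathcal S'=(B',\mathcal P,\mathcal T)$. Since $\mathcal P$ and $\mathcal T$ are shared, both specifications produce the same counterfactual model $\theta_k^{\mathrm{cf}}$ for every candidate $k\in\mathcal K$. They also share the factual model $\theta_{\mathcal D}$. The only difference is the readout applied to these parameters.

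I will first treat the case where $\mathcal T$ is deterministic, as for the exact one-step, unrolled, and Newton references used in Section~4, so that the expectation in Equation~\ref{eq:1} is trivial. Write $b_k=B(q;\theta_k^{\mathrm{cf}})$ and $b_0=B(q;\theta_{\mathcal D})$. Then
\begin{equation*}
I_{\mathcal S}(q,k)=b_k-b_0,\qquad I_{\mathcal S'}(q,k)=\phi(b_k)-\phi(b_0).
\end{equation*}
The baseline terms $b_0$ and $\phi(b_0)$ do not depend on $k$. Ordering candidates by $I_{\mathcal S}$ is therefore the same as ordering them by $b_k$, and ordering by $I_{\mathcal S'}$ is the same as ordering by $\phi(b_k)$. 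For any two candidates $j,k$, strict monotonicity of $\phi$ gives $b_j>b_k\iff\phi(b_j)>\phi(b_k)$ and $b_j=b_k\iff\phi(b_j)=\phi(b_k)$. The pairwise strict orders and ties therefore coincide.

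It follows that the two argsort permutations in Equation~\ref{eq:2} coincide, provided ties are broken by the same fixed rule, such as index order. Every rank statistic computed from them is then identical, including Kendall's $\tau=1$ and identical top-$k$ sets. The sign of each influence is also preserved, since $b_k>b_0\iff\phi(b_k)>\phi(b_0)$. This explains the perfect sign agreement reported in Table~\ref{tab:a-6}. The soft margin case follows by exhibiting the explicit strictly increasing map from query loss to soft margin, taken as $B$ and $B'$ with the sign convention fixed.

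The step I expect to be the main obstacle is stochastic $\mathcal T$, where Equation~\ref{eq:1} carries a genuine expectation. In that case $\mathbb E[\phi(b_k)]$ need not be a monotone function of $\mathbb E[b_k]$, so the argument above breaks down. My first approach would be to restrict the statement explicitly to deterministic $\mathcal T$ with fixed randomness, i.e.\ a fixed seed, so that the argument holds pathwise. If a stochastic version is wanted, I would add an assumption: either $\phi$ is affine, or the laws of $b_k$ are ordered in first-order stochastic dominance across $k$. Under such an assumption, $\mathbb E[\phi(b_k)]$ inherits the ordering of $\mathbb E[b_k]$.
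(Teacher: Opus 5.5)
Your argument is the same as the paper's: the factual term $B(q;\theta_{\mathcal D})$ is common to all candidates, so strict monotonicity of $\phi$ preserves every pairwise comparison of the counterfactual readouts, and, as you note, every tie as well, which makes the rankings identical. Your caveat about a genuinely stochastic $\mathcal T$ is correct and goes beyond the paper: its proof writes the influence as a deterministic difference and drops the expectation in Equation~\ref{eq:1}, so the statement implicitly needs a deterministic (or fixed-seed) $\mathcal T$ or an extra assumption such as affine $\phi$ or stochastic dominance.
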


\begin{proof}
Fix a query $q$. For any two candidates $z$ and $z'$, let $\theta^{(z)}$ and $\theta^{(z')}$ denote the counterfactual parameters under $\mathcal T$ with $z$ and $z'$ removed, respectively. The exact influence of $z$ on $q$ under behavior $B$ is $B(q; \theta^{(z)}) - B(q; \theta)$, and similarly for $z'$ and for $B'$.

Because $\phi$ is strictly increasing, it preserves strict inequalities. That is,
\begin{align*}
B(q; \theta^{(z)}) - B(q; \theta) > B(q; \theta^{(z')}) - B(q; \theta)
\end{align*}
if and only if
\begin{align*}
\phi(B(q; \theta^{(z)})) - \phi(B(q; \theta)) > \phi(B(q; \theta^{(z')})) - \phi(B(q; \theta)).
\end{align*}
By the definition of $B'$, this is equivalent to
\begin{align*}
B'(q; \theta^{(z)}) - B'(q; \theta) > B'(q; \theta^{(z')}) - B'(q; \theta).
\end{align*}
Thus candidate $z$ ranks above $z'$ under $B$ if and only if $z$ ranks above $z'$ under $B'$, so the two rankings are identical.
\end{proof}

\textbf{Remark}. The proposition concerns exact influence under an exact monotone transformation. In practice a monotone surrogate is evaluated with finite precision and, for learning-based estimators, combined with a first-order approximation of the parameter response, so two monotone behaviors can still produce slightly different scores and therefore slightly different rankings.

\section{Relation to Prior Work}\label{app:related}
Influence estimators are known to produce inconsistent or unstable results, and previous work has offered two explanations. One of them is approximation error. For example, influence functions are fragile in deep networks since their accuracy depends on depth, regularization, and the properties of the training data~\citep{basu2021influence}, and comparable instability has been reported for other attributions~\citep{li2025influence,aberger2025limitations,wang2024empirical}. Methods based on gradient similarity~\citep{pruthi2020estimating,RapidIn,xia2024less} are typically first-order approximations of loss changes and, consequently, entail varying degrees of approximation error. \citet{bae2022if} compare influence functions with leave-one-out retraining and show that the two answer different questions, because an infinitesimal upweighting under a local linearization is not the same intervention as removing an example. Prior work identifies individual sources of this disagreement, each tied to a specific estimator or intervention. Instead, we show that the disagreement is often not a matter of accuracy, since influence is defined only relative to a specification and methods that differ in their specification target different quantities. Comparison without specification alignment is therefore ill-posed. We formalize influence over a specification $\mathcal S=(B,\mathcal P,\mathcal T)$, separate specification mismatch from approximation error, and show that the behavior axis, which prior work generally ignores, plays an important role in the quality of influence estimation and data attribution.

\end{document}